\documentclass{article}[accepted]

\usepackage{microtype}
\usepackage{graphicx}
\usepackage{subcaption}
\usepackage{booktabs} 

\usepackage{hyperref}

\usepackage{soul}
\usepackage{amsthm}
\usepackage{multirow}

\usepackage[accepted]{icml2026}

\usepackage{amsmath}
\usepackage{amssymb}
\usepackage{mathtools}
\usepackage{amsthm}

\usepackage[capitalize,noabbrev]{cleveref}

\theoremstyle{plain}
\newtheorem{theorem}{Theorem}[section]

\theoremstyle{definition}
\newtheorem{definition}[theorem]{Definition}

\theoremstyle{remark}

\usepackage[textsize=tiny]{todonotes}

\icmltitlerunning{Watch Your Step: Information Injection in Diffusion Models via Shadow Timestep Embedding}

\begin{document}

\twocolumn[
  \icmltitle{Watch Your Step: Information Injection in Diffusion Models \\ via Shadow Timestep Embedding}



  \icmlsetsymbol{equal}{*}

  \begin{icmlauthorlist}
  \icmlauthor{An Huang}{yyy}
  \icmlauthor{Junggab Son}{yyy}
    \icmlauthor{Zuobin Xiong}{yyy}
  \end{icmlauthorlist}

  \icmlaffiliation{yyy}{Department of Computer Science, University of Nevada Las Vegas, Las Vegas, USA}

  \icmlcorrespondingauthor{Zuobin Xiong}{zuobin.xiong@unlv.edu}


  \vskip 0.3in
]



\printAffiliationsAndNotice{}  

\begin{abstract}
Diffusion models have become the foundation of modern generative systems, with most research focusing primarily on improving generation efficiency and output quality.
The timestep embedding component is a crucial part of the diffusion pipeline, which provides a temporal conditioning signal to the denoising network, enabling it to adapt its predictions across different noise levels throughout the process.
Despite their potential to contain substantial information, timestep embeddings remain underexplored in current research, especially for security risks and reliable provenance. 
To fill this gap, we introduce \textbf{Shadow Timestep Embedding (STE)}, a novel mechanism that investigates the underutilized temporal space for malicious information injection into diffusion models.
In particular, when zooming in on the timestep embedding space, we find that different timesteps exhibit distinct representational capabilities that can encode side-channel information.
Moreover, such encoded information can be utilized for attack and defense purposes through the scheduler interface. 
We present a theoretical analysis of timestep embeddings as position-encoding mappings and derive a mutual coherence evaluation that explains the separability of disjoint timestep intervals.
Our findings reveal the diffusion model's timestep as a powerful side channel for carrying dedicated information, motivating new directions for adversarial generative modeling by understanding the temporal dimension.
\end{abstract}

\section{Introduction}
\label{sec:intro}

Trained and fine-tuned on large-scale datasets~\cite{schuhmann2022laion, schuhmann2021laion}, diffusion models (DMs)~\cite{ho2020denoising,song2020denoising,dhariwal2021diffusion,ho2022cascaded,ho2022classifier,rombach2022high,zhang2022fast} have been utilized as the standard backbone for high-fidelity content generation across images, video, audio, and natural language, achieving state-of-the-art generation quality through iterative denoising.
Although early diffusion models, e.g., DDPM, require thousands of denoising steps, recent advances in sampling efficiency have reduced the number of sampling steps by one to two orders of magnitude~\cite{lu2022dpm,zhao2023unipc}, accelerating real-world deployment across consumer and enterprise pipelines.
However, this growing ubiquity of diffusion models raises urgent questions about safety, accountability, and provenance~\cite{guo2025copyrightshield, carlini2023extracting,duan2023diffusion,truong2025attacks}, as the powerful AI tool enables precise generative control, which can be misused to produce harmful content~\cite{zhang2024generate}.
Therefore, understanding the comprehensive threat surfaces and security implications in diffusion pipelines and how they can be subverted has become an important research topic.

Early works~\cite{yu2023cross,chen2025hiding} show that diffusion models can be used as particularly powerful steganographic carriers, in which auxiliary information is hidden within model parameters or latent representations via steganography~\cite{younis2025attenhidenet, sanjalawe2025deep} while preserving the perceptual fidelity of generated data.
These findings suggest that diffusion models support both output-level and model-level secret channels to \texttt{embed information.}
On the other hand, recent security studies demonstrate that diffusion models are vulnerable to a set of attacks~\cite{chen2023advdiffuser,shan2024nightshade,chou2023backdoor}, especially backdoor attacks~\cite{lin2025backdoordm}.
In such a scenario, the attacker can implant backdoor malicious triggers into diffusion models based on steganography (i.e., \texttt{embedding malicious information}), so that crafted prompts can produce undesired outputs at inference time~\cite {chou2023villandiffusion,zhai2023text,chen2025invisible,chen2025parasite}.
On the contrary, steganography can also serve as a watermark mechanism for model attribution and content traceability by injecting signed embeddings and verifying their presence in the model output~\cite{li2025dnnkeylock,wang2025cascade}.

So far, the mainstream methods in information injection in diffusion models focused on conditions and output (e.g., generated images) aspects, while the intrinsic control over the timestep embedding of the model is overlooked, which could exhibit significant potential in different applications.
This very research gap motivates us to study whether the temporal embeddings in diffusion models can function as a covert information channel, enabling isolated generative behaviors without modifying the observable sampling procedure.

Moreover, through literature review, we found that timestep security is a pressing and practical issue in diffusion models because they are inconspicuous yet essential.
For instance, as shown in Fig.~\ref{fig:threat}, an adversary can exploit the temporal embedding space by subtly encoding information in the timestep embeddings.
Such malicious manipulation can be achieved through existing code poisoning attacks~\cite{gokkaya2026software,wan2024data}, where the users mistakenly import disguised pipelines that are published by attackers. 
Therefore, these methods can become stealthier, leave fewer fingerprints, and easily evade defenses that monitor input/output spaces in traditional settings.
\begin{figure}
    \centering
    \includegraphics[width=0.9\linewidth]{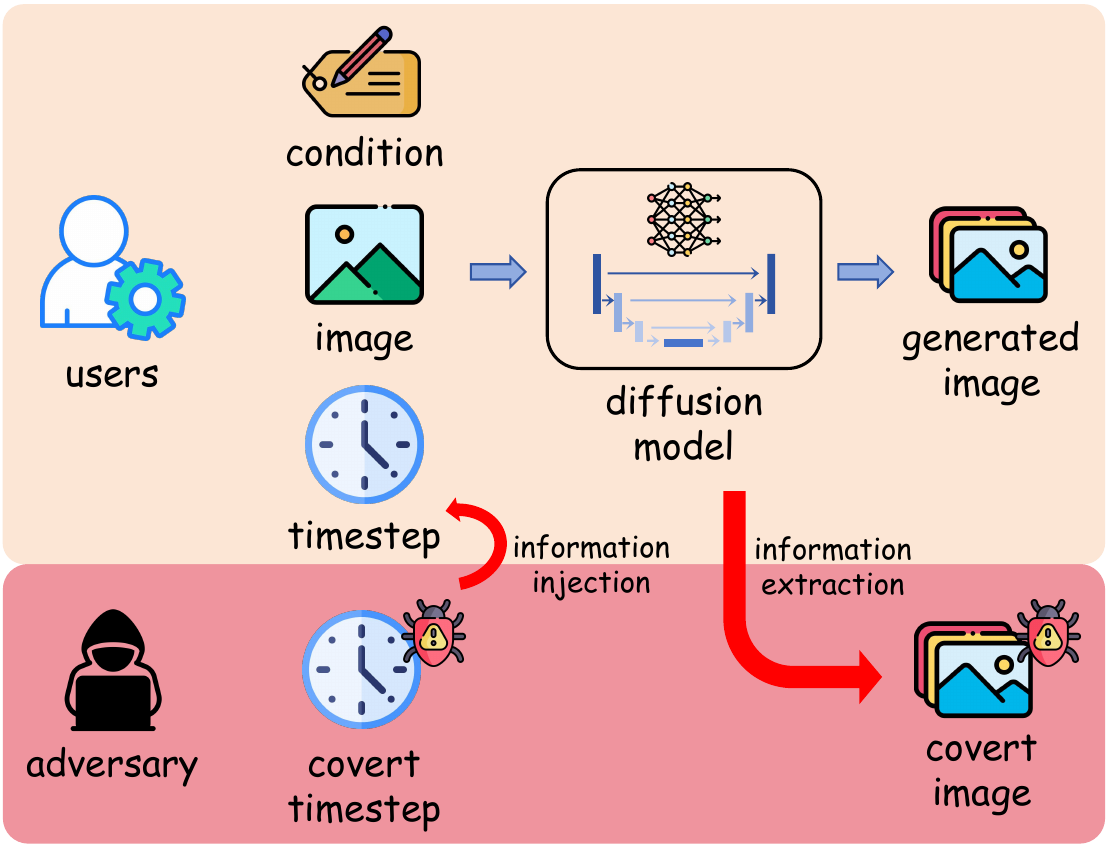}
    \caption{Adversaries can use the timestep as a secret channel, which can inject extra information. }
    \label{fig:threat}
\end{figure}
Based on the application, this work introduces a novel information injection method, Shadow Timestep Embedding (\textbf{STE}), which
operates on the temporal side-channel, introducing an invisible yet controllable timestep embedding interface.

Specifically, 
we find that modifying the timestep range in diffusion models can extend the embedding resolution, providing an unoccupied subspace for more information injection.
Our analysis illustrates that the injected information can be malicious or legitimate, which can be extended to a broader security scenario beyond steganography. 

The contributions of this work are as follows. 
\begin{itemize}
    \item We propose Shadow Timestep Embedding (STE), a timestep-based information injection method for diffusion models, which uncovers an underutilized temporal channel to encode additional, controllable information.

    \item We showcase that the extended embedding space can introduce a dual-use security surface, e.g., STE can function as a covert attack injection method or as a watermark verification tool.
    
    \item Through theoretical analysis, we prove the mutual coherence between different timesteps, revealing the fundamental reason that the embedding space can hold representation.
    
    \item Experiment results highlight that STE can inject auxiliary data distributions reliably while maintaining independence between the explicit and shadow manifolds. 
\end{itemize}

\section{Related works}
\label{sec:related}

\paragraph{Steganography in Diffusion Models.}
Recent work has explored diffusion models as powerful carriers for steganography, leveraging the denoising process to embed and recover hidden information. 
StegaDDPM~\cite{peng2023stegaddpm} embed secret messages into the denoising trajectory or noise space, achieving high-capacity and visually imperceptible steganography. 
Training-free approaches such as CRoSS~\cite{yu2023cross} introduce controllable and secure steganographic mechanisms by explicitly conditioning the diffusion process. 
Complementary to output-level hiding, DMIH~\cite{chen2025hiding} demonstrates that diffusion models themselves can act as steganographic containers, embedding hidden image mappings directly into the learned score function.

However, prior approaches do not fully exploit the representational capacity of the temporal timestep embedding, which offers a covert and practical mechanism for information injection and extraction.

\paragraph{Security of Diffusion Models.} 
The rapid deployment of diffusion models has raised concerns about their robustness and provenance.
Backdoor attacks demonstrate that diffusion models can be maliciously fine-tuned to produce attacker-chosen outputs under specific triggers while behaving normally otherwise.
VillanDiffusion~\cite{chou2023villandiffusion} provides a unified framework for both unconditional and conditional backdoors, revealing their persistence across different schedulers.
Similarly, BadT2I~\cite{zhai2023text} embeds multi-modal triggers into text-to-image systems with minimal data poisoning.
On the defensive side, concept erasure methods~\cite{gandikota2023erasing,gong2024reliable,chen2024score} attempt to remove harmful or copyrighted content via targeted parameter updates,
whereas watermarking aims to establish content provenance.
Specifically, Tree-Ring~\cite{wen2023tree} encodes reversible frequency-domain signatures along the full sampling trajectory,
and ROBIN~\cite{huang2024robin} leverages adversarial optimization to embed robust, invisible watermarks aligned with diffusion dynamics.

Despite these advances, most security research focuses on data, textual prompts, or global model parameters, leaving open the timestep embedding itself as an under-examined dimension for both attacks and defenses.

\section{Method}
\label{sec:method}

\begin{figure*}
    \centering
    \includegraphics[width=0.85\linewidth]{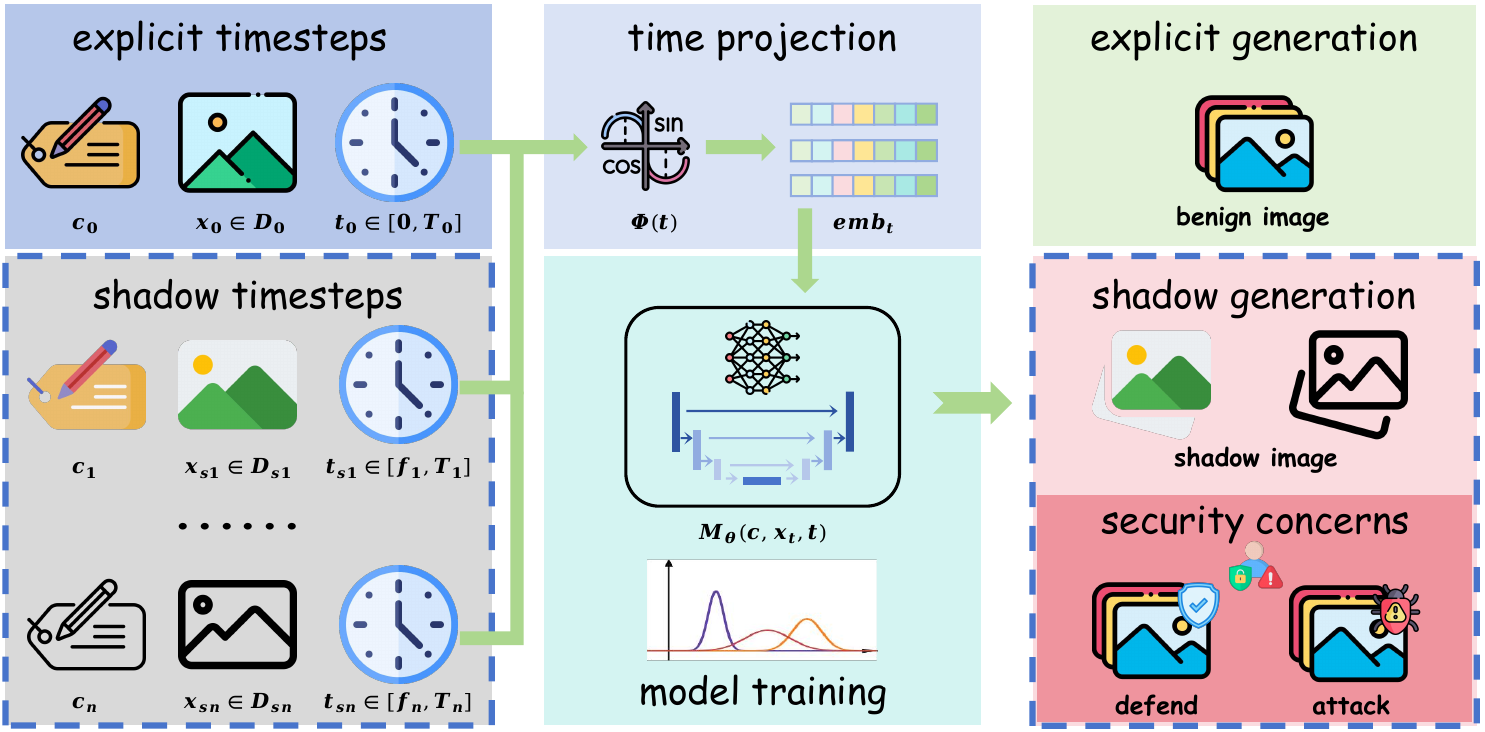}
    \caption{Architecture of Shadow Timestep Embedding (STE). The framework extends conventional diffusion training by introducing shadow timesteps, which parallel explicit timesteps but occupy disjoint temporal intervals.
    Each shadow timestep subset $t_{sn}$ is associated with different data distributions $D_{sn}$.
    During time projection, timesteps are encoded via sinusoidal embeddings and fed into the diffusion model for joint training.
    At inference, the model can generate either explicit or shadow images.
    Such multi-space learning enables new capabilities but also raises security concerns, as shadow subspaces can be exploited for covert defend or attack.}
    \label{fig:architecture}
\end{figure*}

\subsection{Preliminaries}
\textbf{Diffusion Models.}
Diffusion models are a class of generative models that learn to synthesize data by inverting a gradual noising process.
Given a clean sample $\mathbf{x}_0 \sim q(\mathbf{x}_0)$, the forward process progressively perturbs $\mathbf{x}_0$ through a sequence of Gaussian transitions:
{\small\begin{equation}
q(\mathbf{x}_t \mid \mathbf{x}_{t-1}) 
= \mathcal{N}\!\left(
\sqrt{1-\beta_t}\,\mathbf{x}_{t-1},\, 
\beta_t \mathbf{I}
\right), 
\quad t = \{1, \dots, T\},
\end{equation}}%
where $\{\beta_t\}_{t=1}^T$ is a pre-defined variance schedule controlling the noise magnitude at each timestep.

The generative process learns to invert this corruption by predicting the added noise $\boldsymbol{\epsilon}$ at each timestep.
A neural network $\boldsymbol{\epsilon}_\theta(\mathbf{x}_t, t, \mathbf{c})$ parameterized by $\theta$ is trained to approximate the conditional mean of the reverse transition:
{\small\begin{equation}
p_\theta(\mathbf{x}_{t-1} \mid \mathbf{x}_t)
= \mathcal{N}\!\left(
\boldsymbol{\mu}_\theta(\mathbf{x}_t, t, \mathbf{c}),\,
\sigma_t^2\mathbf{I}
\right),
\end{equation}}%
where $\mathbf{c}$ denotes optional conditioning (e.g., text or class label) and $t$ is represented via a \emph{timestep embedding}.
The standard training objective minimizes the denoising error between predicted and true noise:
{\small\begin{equation}
\label{eq:dm}
\mathcal{L}_{\text{DM}}(\theta)
= \mathbb{E}_{t,\mathbf{x}_t,\boldsymbol{\epsilon}}\!
\left[
\left\|
\boldsymbol{\epsilon}
- \boldsymbol{\epsilon}_\theta\!\left(
\mathbf{x}_t,
t,\mathbf{c}
\right)
\right\|_2^2
\right].
\end{equation}}

Each discrete timestep $t$ (typically $t\!\in\![0,1000]$) is mapped to a continuous vector using sinusoidal or learned positional encoding, denoted $\mathbf{emb}_t = \Phi(t)$.
This embedding acts as a global conditioning signal broadcast to every block of the denoising network, effectively coupling the scheduler’s time dynamics with the model’s internal representation.
Our work builds on this observation and extends the embedding range beyond the conventional setup to explore the information capacity of the temporal subspace.

\subsection{Shadow Timestep Embedding}
Standard diffusion models operate over a fixed and compact timestep range $t \in [0, T_0]$, where each $t$ is mapped to a continuous feature vector through a learned or sinusoidal positional encoding. 
This design implicitly assumes that the entire temporal pathway is fully utilized during training.
However, for the positional encoding of timestep embeddings, the maximum period is typically set to 10,000, which means that a large portion of the timesteps (from 1,000 to 10,000) remain unused and only a limited fraction of the embedding spectrum is activated.
This observation opens the possibility of constructing additional and functional independent temporal subspaces.

\noindent\textbf{Shadow Offsets.}
Shadow Timestep Embedding (STE) extends the original timestep domain by introducing a temporally shifted set of indices via
{\small
\begin{equation}
    T_n = T_0 + f_n, \quad f_n \geq 0,
\end{equation}
}%
where $f_n$ is the $n$-th offset of shadow timestep that maps the shadow interval $[0, T_0] \mapsto [f_n, T_n]$.
While the scheduler continues to operate strictly over the standard timestep trajectory, the model receives the shifted index $t_{sn}$ instead of $t$, thereby projecting the computation into a new and well-separated embedding space $e_{t_{sn}}$ other than $e_t$:
\begin{equation}
    e_{t} = \Phi(t), \quad 
    e_{t_{sn}} = \Phi(t_{sn}),
\end{equation}
where $\Phi$ is the embedding function and $t_{sn} \in [f_n, T_n]$ is the shadow timesteps.
If $\Phi(t)$ is a smooth but nonlinear mapping, offsetting $t$ induces embedding vectors that are almost orthogonal to the standard range (see proof in Appendix). 
This separability enables the formation of a parallel shadow temporal manifold, which can encode data distributions that are not presented during explicit timesteps.

\begin{figure}
    \centering
    \includegraphics[width=0.9\linewidth]{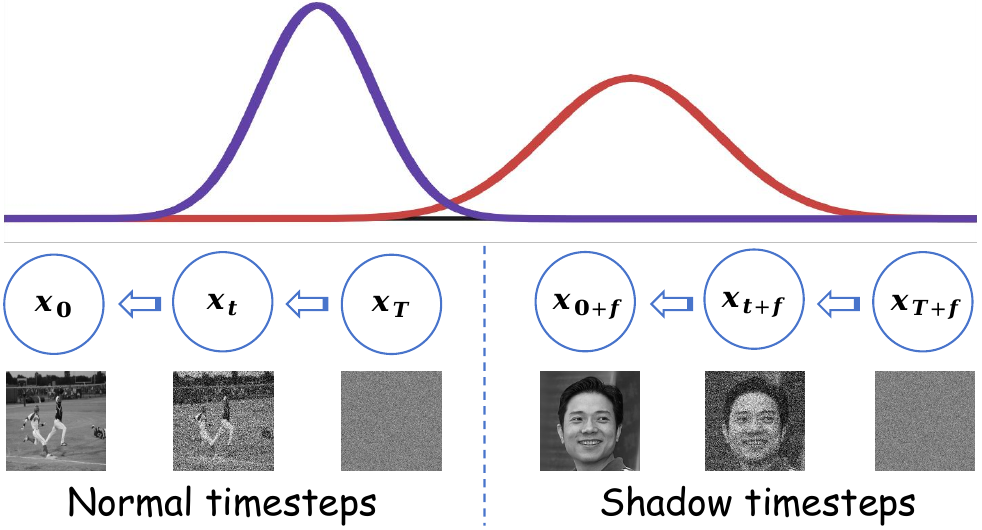}
    \caption{We can use shadow timestep embedding to learn different distributions, and make them independent.}
    \label{fig:distribution}
\end{figure}

\noindent\textbf{Learning Independent Data Distributions.}
A key property of STE is that each offset $f_n$ can be associated to a distinct dataset.
Samples associated with shadow timesteps $t_{sn}$ push the model to learn a distribution $\mathcal{D}_{sn}$ that is independent of the standard data distribution $\mathcal{D}_{0}$ learned under $t \in [0, T_0]$.
Figure~\ref{fig:distribution} shows how different offsets form disjoint temporal bands, each capable of supporting a separate generative behavior:
\begin{equation}
    t \in [0, T_0] 
    \;\longrightarrow\;
    \mathcal{D}_{0},
    \quad
    t_{sn} \in [f_n, T_n]
    \;\longrightarrow\;
    \mathcal{D}_{sn}.
\end{equation}

Thus, by substituting the shadow timestep $t_{sn}$ into the standard diffusion objective in Eq.~\ref{eq:dm}, we obtain the STE-specific loss function:
{\small\begin{align}
\label{eq:ste}
\mathcal{L}_{\text{STE}}(\theta)
&= \mathbb{E}_{t,\mathbf{x}_t,\boldsymbol{\epsilon}}\!
\left[
\left\|
\boldsymbol{\epsilon}
- \boldsymbol{\epsilon}_\theta\!\left(
\mathbf{x}_t,
t,\mathbf{c}
\right)
\right\|_2^2
\right] \nonumber \\
&+ \mathbb{E}_{t_{sn},\mathbf{x}_{t_{sn}},\boldsymbol{\epsilon}}\!
\left[
\left\|
\boldsymbol{\epsilon}
- \boldsymbol{\epsilon}_\theta\!\left(
\mathbf{x}_{t_{sn}},
t_{sn},\mathbf{c}_{sn}
\right)
\right\|_2^2
\right],
\end{align}}where $\mathbf{c}_{sn}$ is the condition of the specific shadow dataset.

Significantly, during the shadow timestep shift process, the scheduler needs no modification.
From the perspective of the user (i.e., the model owner or victim), generation proceeds with the usual sequence $t = [T_0, \ldots, 0]$.
The model, however, interprets the timestep $t$ as $t_{sn}\in [T_n, \ldots, f_n]$, enabling shadow generation that follows the same numerical integration path as explicit generation but activates an entirely different learned distribution.
As shown in Fig.~\ref{fig:architecture}, the offset serves as a ``temporal key'', a mechanism that switches the explicit generative behavior to shadow behavior without altering scheduler dynamics.
That means the information injection happens during the training process, while the ``temporal key'' is used to extract the hidden information from different distributions.

\noindent\textbf{Security Implications.}
The ability to conceal a separate distribution to a shadow timestep interval produces profound security implications, including attack and defense.

\underline{Attack Perspective.}  
    Attackers can use STE as a stealthy information injection attack.
    When $\mathcal{D}_{sn}$ contains poisoned data, the offset $f_n$ itself becomes the trigger.
    The model behaves normally for $t \in [0, T_0]$ but produces hidden information when exposed to $t_{sn}$.
    This attack pathway is covert because both the scheduler and the model interface pipeline remain unchanged.
    
\underline{Defence Perspective.}
    Defender can use STE as a ownership verification mechanism.
    When $\mathcal{D}_{sn}$ consists of images containing structured watermark signals, STE becomes a high-fidelity, invisible watermarking scheme.
    The watermark is activated only by stepping into the shadow interval, making it robust against common post-processing.

In summary, STE introduces an extensible temporal subspace that simultaneously (i) preserves compatibility with existing schedulers, (ii) supports independent generative behavior, and (iii) opens a new and largely unexplored security surface, either as a vulnerability for adversaries or
as a useful primitive for secure provenance.

\subsection{Mutual Coherence Between Temporal Intervals}
The preceding section introduces the analysis of the difference between the standard timestep and shadow timestep embeddings. 
If we try to inject additional information into this extended space, the natural question arises: \emph{are these shadow embeddings separable enough to encode distinct information?} 

To address this, we analyze the \textbf{mutual coherence} between embeddings from different temporal intervals and derive the following theorem.

\begin{theorem}[Mutual Coherence of STE]
    Let $\Phi(t)\in\mathbb{R}^{d}$ denote the sinusoidal timestep embedding used in diffusion models:
{\small\begin{equation}
    \Phi(t)
    =
    \big[
    \sin(\omega_1 t),\,\cos(\omega_1 t),\,
    \dots,\,
    \sin(\omega_m t),\,\cos(\omega_m t)
    \big],
\end{equation}}%
where $d=2m$, and the frequencies follow a geometric progression
    $\omega_i
    =
    \exp\!\left(
        -\frac{\log(t_p)}{\,m - t_d\,}(i-1)
    \right)$,
where $t_p$ is the maximum time period and $t_d$ is the downscale frequency shift time.

Then, 
(1) for two timesteps $t,s\in\mathbb{R}$, the cosine similarity between embeddings is
{\small\begin{equation}
    k(t,s)
    = \frac{1}{m}\sum_{i=1}^{m}\cos\!\big(\omega_i (t-s)\big),
\end{equation}}
and (2) the mutual coherence between two disjoint temporal intervals $I_0=[0,t_0)$ and $I_1=[t_0,t_1)$ is upper bounded by 
{\small\begin{align}
\label{eq:mu}
    \mu
    =    \sup_{\Delta\in[t_0,t_1]}
    \left|
        \frac{1}{m}\sum_{i=1}^{m}\cos(\omega_i (t-s))
    \right|.
\end{align}}%
\end{theorem}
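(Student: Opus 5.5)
Part (1) is a direct computation. I would first show that $\|\Phi(t)\|_2^2=\sum_{i=1}^m(\sin^2(\omega_i t)+\cos^2(\omega_i t))=m$ for every real $t$, so all embeddings lie on the sphere of radius $\sqrt m$. The cosine similarity is then $\langle\Phi(t),\Phi(s)\rangle/m$. Next I would expand the inner product pairwise over frequencies and apply the identity $\sin a\sin b+\cos a\cos b=\cos(a-b)$. This gives
\begin{equation*}
\langle\Phi(t),\Phi(s)\rangle=\sum_{i=1}^m\cos\big(\omega_i(t-s)\big),
\end{equation*}
and dividing by $m$ yields $k(t,s)$. The key consequence I would stress is that $k$ depends only on the lag $\Delta=t-s$. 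That is, $k(t,s)=\kappa(\Delta)$ with $\kappa(\Delta)=\frac1m\sum_i\cos(\omega_i\Delta)$, which is even in $\Delta$. The specific geometric form of $\omega_i$ plays no role here.

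For part (2), I would define the mutual coherence between $I_0$ and $I_1$ as
\begin{equation*}
\mu(I_0,I_1)=\sup_{t\in I_1,\,s\in I_0}|k(t,s)|,
\end{equation*}
the largest absolute normalized inner product between an embedding from one interval and one from the other. By translation invariance this equals $\sup_{\Delta\in D}|\kappa(\Delta)|$, where $D=\{t-s: t\in I_1,\ s\in I_0\}$ is the set of realizable lags. It therefore suffices to show that $D$ is contained in the lag range used in \eqref{eq:mu}. The bound then follows because a supremum over a subset is at most the supremum over the superset. I would read the expression in \eqref{eq:mu} as $\sup_\Delta|\kappa(\Delta)|$, i.e.\ with $t-s$ replaced by $\Delta$.

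The main obstacle is pinning down the lag set, since the statement writes $\Delta\in[t_0,t_1]$. For $t\in[t_0,t_1)$ and $s\in[0,t_0)$, the lag satisfies $\Delta=t-s\in(0,t_1)$, not $[t_0,t_1]$. So I would need to fix the convention the theorem intends. In the STE setting, the shadow interval is a shift of the explicit one by $f_n$, and timesteps are compared at corresponding positions along the same scheduler trajectory ($t_{sn}=t+f_n$). The relevant lags are then exactly the offset, so $\Delta=f_n$. More generally, for a shift that is at least $t_0$ and at most $t_1$, every relevant $\Delta$ lies in $[t_0,t_1]$.

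I would state this reading explicitly as the interpretation of ``disjoint intervals'' used in the theorem. Under it, the containment $D\subseteq[t_0,t_1]$ holds and the bound is immediate. For the fully general pairwise reading, I would state the corrected range $\Delta\in(0,t_1)$ and note that the same argument goes through. Finally, I would remark, without needing it for the bound, why $\mu$ is small: for large $\Delta$ the phases $\omega_i\Delta$ spread out across the geometric frequency ladder, so the cosine average decays like an oscillatory integral. This is why the bound is informative.
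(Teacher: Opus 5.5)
Your part (1) is essentially the paper's argument: constant norm $\sqrt m$, the identity $\sin a\sin b+\cos a\cos b=\cos(a-b)$, and translation invariance. For part (2) the paper gives no proof. Its appendix \emph{defines} $\mu:=\sup_{t\in I_0,\,s\in I_1}|k(t,s)|$ and asserts in the same display that this equals $\sup_{\Delta\in[t_0,t_1]}|\kappa(\Delta)|$. You correctly found the error in that step: the realizable lags form $(0,t_1)$, not $[t_0,t_1]$.

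The gap is in the conclusion you draw from this. Under the all-pairs definition (yours and the paper's), take the point $t_0\in I_1$ and points of $I_0$ increasing to $t_0$. The lags tend to $0$, and $\kappa$ is continuous with $\kappa(0)=1$, so the coherence of two adjacent intervals is exactly $1$. Statement (2) is therefore not a mistyped range that ``the same argument'' repairs; it is false whenever $\sup_{[t_0,t_1]}|\kappa|<1$. This happens, for instance, whenever $0<t_0<t_1<\pi$. There $\omega_1=1$ keeps $|\cos\Delta|\le c<1$ on $[t_0,t_1]$, so $|\kappa|\le (c+m-1)/m<1$. It also happens in the paper's own setting $t_0=1000$, $t_1=2000$, where its figure shows $|\kappa|$ near $0.1$. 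Your corrected bound over $(0,t_1)$ is therefore the vacuous $\mu\le 1$, and your closing remark that $\mu$ is small contradicts it.

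The ``corresponding positions'' reading does not rescue the theorem either. Restricting to pairs $(t,t+f_n)$ collapses the supremum to the single number $|\kappa(f_n)|$, which turns the inequality into a tautology rather than a proof. It is also not the comparison that matters for separating the two generative behaviours. That comparison is between $\Phi(t)$ and $\Phi(t'+f_n)$ for arbitrary $t,t'$, which is why the paper's definition ranges over all pairs.

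The right outcome is to prove (1) and refute (2) as stated. Then prove the repaired statement with a gap: $I_0=[0,t_0)$ and $I_1=[t_0+g,t_1)$. Its lags lie in $(g,t_1)$, giving $\mu\le\sup_{\Delta\in[g,t_1]}|\kappa(\Delta)|$. This version carries real content and matches the paper's ``lag $\ge 1000$'' observation, e.g.\ $[0,1000)$ versus $[2000,3000)$.
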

\begin{proof}
    We refer the readers to the Appendix for the complete proof.
\end{proof}

In Eq.~\eqref{eq:mu}, $\mu$ indicates that embeddings from $I_0$ and $I_1$ are nearly orthogonal, hence linearly separable in the embedding space.
This separability is crucial for STE because it ensures that extending timesteps into a new interval does not collapse into the same feature manifold, but instead provides an independent channel for encoding auxiliary distributions.

\noindent\textbf{Empirical Observations.}
To demonstrate this property, we visualize empirical statistics of timestep embeddings, as shown in Figure~\ref{fig:overall}. 
We measure $\mu$ in Fig.~\ref{fig:mu} across the full timestep range and observe that when $t - s \geq 1000$, the coherence remains very small and swings around $0.1$, indicating that embeddings between the normal and shadow intervals are nearly orthogonal. 
This validates that extended timesteps form a distinguishable subspace suitable for additional information channels.
For $t - s < 1000$, the performance of the model will degrade, which is caused by the region overlap of timestep embedding and the difference in noise level of timestep $t$. 
Possible side effects of this setting are discussed in Section~\ref{sec:impact_of_training}. 
%

A heatmap of 128-dimensional embeddings in Fig.~\ref{fig:heatmap} illustrates distinct value patterns across timesteps, especially beyond the original training range of 1000 steps. 
Such spatially varying intensity patterns verifies that extended timesteps yield significantly different embedding trajectories from their normal counterparts.

Together, these results confirm that the timestep embedding space is rich enough to support multiple, partially independent subspaces. 
By extending the temporal domain and leveraging its low mutual coherence, we can strategically introduce \emph{shadow timesteps} as an auxiliary encoding mechanism without interfering with the model’s original dynamics. 
This property underpins the feasibility of injecting new data distributions or watermark signals purely through the temporal conditioning pathway.

\begin{figure}[t]
    \centering
    \begin{subfigure}[b]{0.23\textwidth}
        \includegraphics[width=\textwidth]{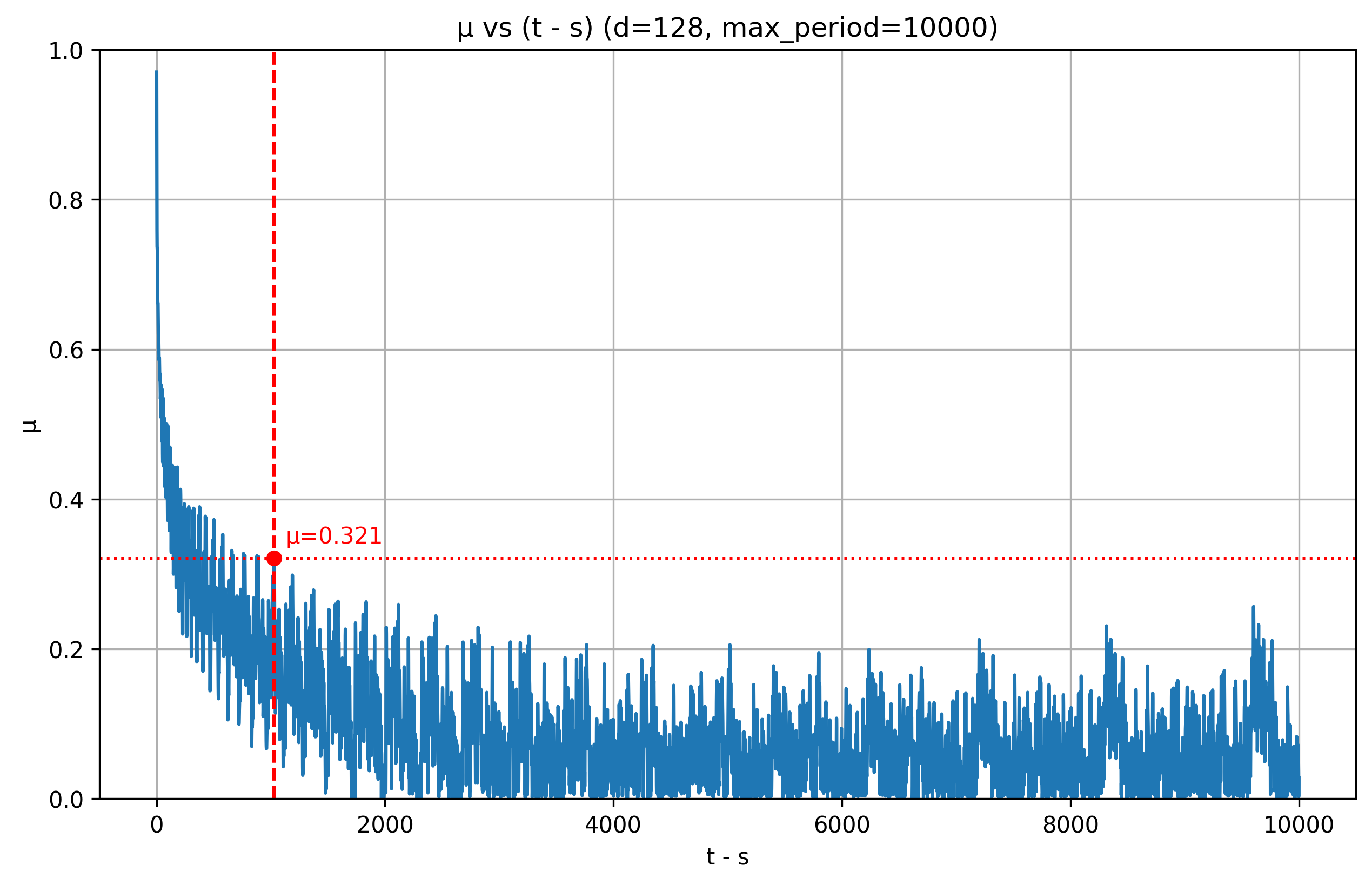}
        \caption{Mutual coherence}
        \label{fig:mu}
    \end{subfigure}
    \hfill
    \begin{subfigure}[b]{0.23\textwidth}
        \includegraphics[width=\textwidth]{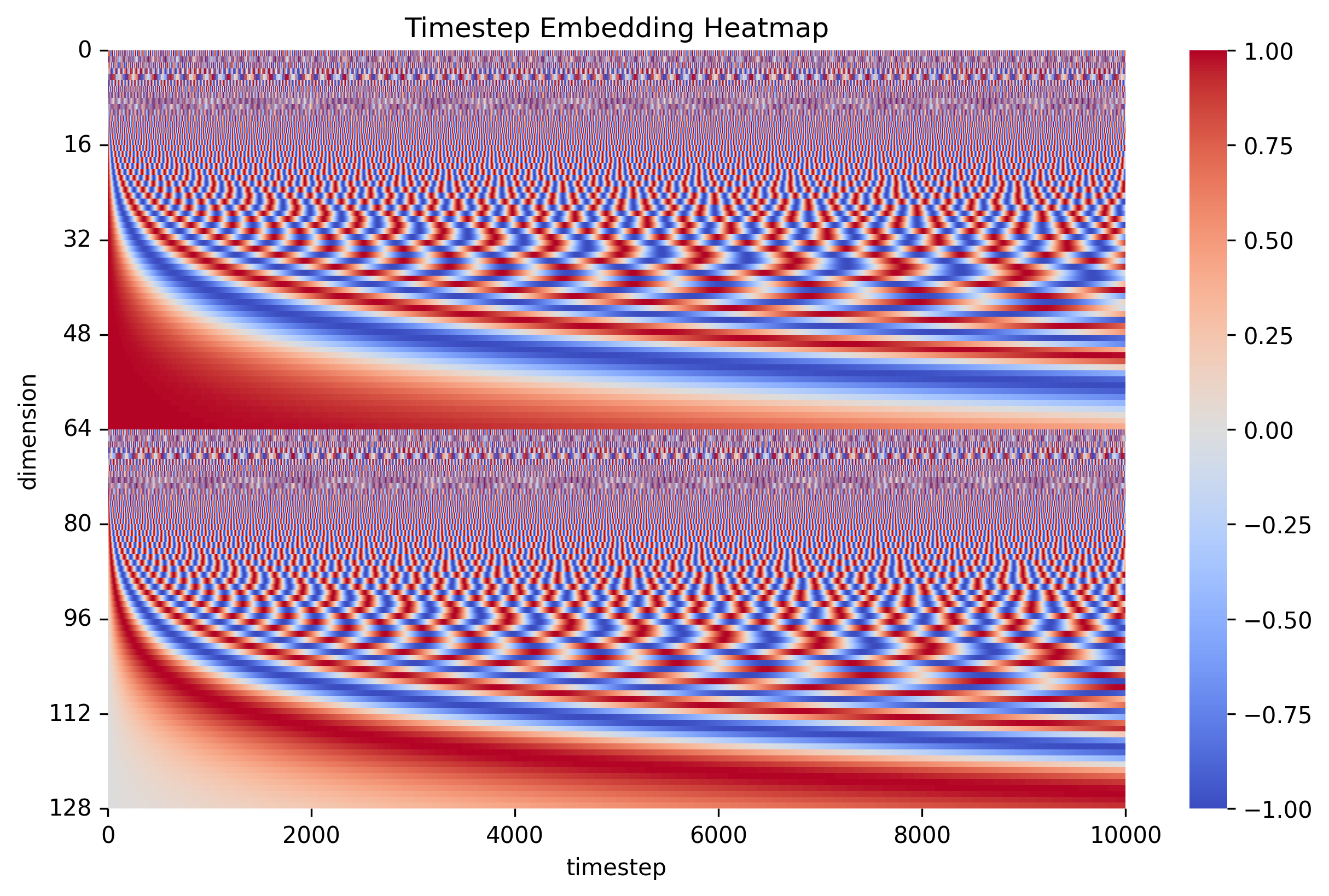}
        \caption{Heatmap}
        \label{fig:heatmap}
    \end{subfigure}
    \caption{Experiments on separability of time embeddings}
    \label{fig:overall}
\end{figure}

\section{Experiments}
\label{sec:experiments}

\begin{table*}[ht]
\caption{General performance of STE with three different datasets. The baseline shows the performance of a single-dataset training for the diffusion model. The dataset set means how to bind different datasets with STE.
[1, 1, 0] means no Fashion-MNIST, [1, 0, 1] means no MNIST, and [1, 1, 1] means that contains three datasets.
}
\label{tab:general}
\centering
\begin{tabular}{cccccccccc}
\hline
\multirow{2}{*}{Dataset Set} & \multicolumn{3}{c}{CIFAR-10} & \multicolumn{3}{c}{MNIST} & \multicolumn{3}{c}{Fashion-MNIST} \\
                    & FID$\downarrow$   & ACC$\uparrow$     & ER$\downarrow$    & FID$\downarrow$    & ACC$\uparrow$     & ER$\downarrow$     & FID$\downarrow$    & ACC$\uparrow$     & ER$\downarrow$     \\ \hline
Baseline            & 24.38 & 73.41\% & \textbf{8.02\%} & 1.59   & \textbf{98.33\%} & 10.58\% & 12.16  & 88.12\% & 8.63\%  \\
{[}1, 1, 0{]}   & 22.20 & 73.35\% & 8.69\% & \textbf{1.18}   & 97.55\% & 10.75\% & 392.79 & 10.41\% & 10.13\% \\
{[}1, 0, 1{]}   & 23.32 & 75.03\% & 8.85\% & 351.23 & 9.63\%  & \textbf{10.06\%} & \textbf{3.31}   & 87.96\% & 8.99\%  \\
{[}1, 1, 1{]} & \textbf{21.82} & \textbf{75.65\%} & 9.17\% & 1.85   & 97.77\% & 10.92\% & 6.56   & \textbf{88.76\%} & \textbf{8.14\%}  \\ \hline
\end{tabular}
\end{table*}

\subsection{Experiment Setup}
We evaluate Shadow Timestep Embedding (STE) across three dimensions:
(i) \textbf{general performance}, measuring whether extending the temporal embedding space affects the model’s core generative quality and the extraction accuracy for the hidden distribution, 
(ii) \textbf{attack performance}, examining STE as a covert information injection attack mechanism, and 
(iii) \textbf{defend performance}, assessing the performance of STE as a watermark generator. 
Our experiments are designed to fully characterize both the utility and representational capability of the timestep domain.

\noindent\textbf{Baselines.}
We benchmark STE against three categories of baselines:
(1) \textbf{Diffusion scheduler baselines:} DDPM, DDIM, and DPM-Solver~\cite{ho2020denoising,song2020denoising,lu2022dpm}, representing standard probabilistic, deterministic, and ODE-based sampling behaviors.
(2) \textbf{Backdoor attack baselines:} VillanDiffusion and BadDiffusion~\cite{chou2023villandiffusion, chou2023backdoor}, two state-of-the-art methods showing that diffusion models can be backdoored via multimodal or image–condition triggers. 
These baselines establish the achievable attack power when the adversary operates in pixel space or conditioning space.
Our STE extends this comparison to the timestep domain.
(3) \textbf{Watermark baselines:} Tree-Ring, ROBIN, and SleeperMark~\cite{wen2023tree,huang2024robin,wang2025sleepermark}, representing reversible trajectory-based watermarks and adversarially optimized robust watermarks, respectively. 
By comparing STE-based watermark encoding to existing pixel- and frequency-space watermarking, we quantify whether the temporal channel can afford additional robustness or stealth.

\noindent\textbf{Metrics.}
To comprehensively assess STE, we adopt the following evaluation metrics: 
(1) \textbf{FID}, standard generative quality metric evaluating the distributional distance between generated and real images.
(2) \textbf{Accuracy (ACC)}, we measure whether generated samples preserve correct class semantics. 
This also enables quantifying whether shadow-timestep generation unintentionally leaks into unintended distributions.
(3) \textbf{Exposure Rate (ER)}, a metric introduced to quantify unintended dataset leakage. 
ER is defined as the mean classification accuracy of a generated set when evaluated by classifiers trained on other datasets.
Higher ER indicates that the generated samples are unintentionally exposed to an undesired timestep range.
(4) \textbf{Attack Success Rate (ASR)}, it measures the fraction of generations that successfully extract the attacker-chosen target distribution when triggered by a shadow timestep offset.
(5) \textbf{Watermark Detection Accuracy}, for watermark experiments, we measure the proportion of detection accuracy by a watermark detector. 

\noindent\textbf{Base Model and Training Configuration.}
Unless otherwise specified, all experiments use the DDPM architecture and training pipeline as described in the original formulation.
The denoiser follows the canonical U-Net backbone, and sampling uses the default DDPM scheduler without any modification to its inference trajectory.
STE is implemented by augmenting the timestep range beyond the standard $T_0 = 1000$ interval to create shadow intervals $t_{sn} \in [f_n, f_n + T_0]$, where each offset $f_n$ is set as an integer multiple of $1000$ (e.g., $f_n \in \{1000, 2000, 3000\}$).
Unless explicitly stated, each shadow interval corresponds to one additional dataset or security-related data distribution.

All models are trained for 100 epochs with a learning rate of $2 \times 10^{-4}$. Training is performed on NVIDIA L40S GPUs. 
This configuration is used consistently across general evaluation, backdoor experiments, and watermark experiments to enable fair comparison across settings.

\begin{figure}[t]
    \centering
    \includegraphics[width=1\linewidth]{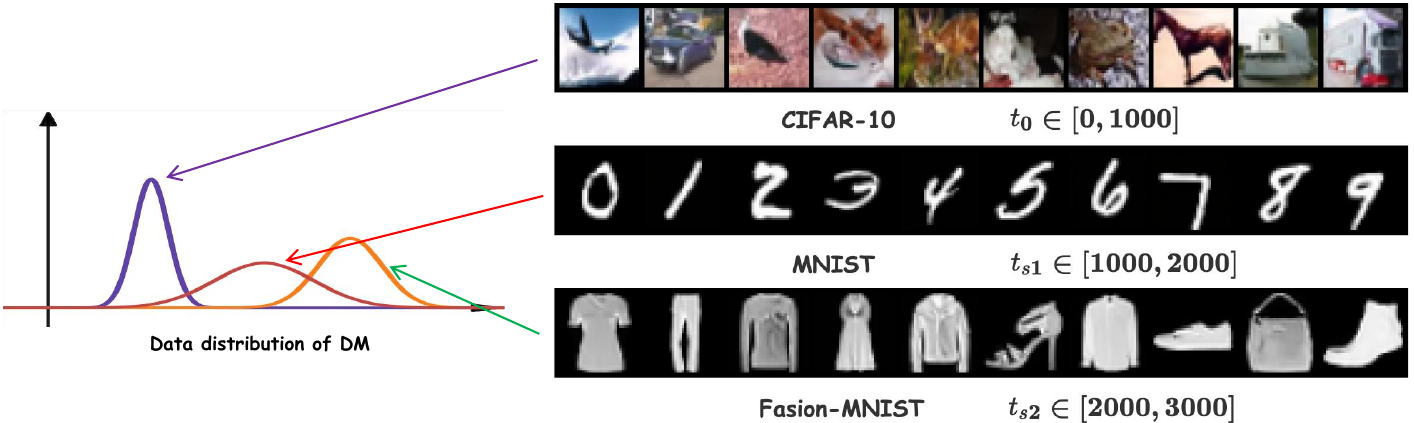}
    \caption{Generated samples by STE from explicit timesteps and different shadow timesteps embeddings.}
    \label{fig:generated}
\end{figure}

\subsection{General STE Performance}
We first evaluate whether extending the timestep domain via STE affects the generative performance of diffusion models, as shown in Table~\ref {tab:general} and Fig.~\ref{fig:generated}. 
To this end, we bind different datasets to different shadow offsets and assess how STE behaves when jointly training on multiple independent distributions.  
Table~\ref{tab:general} reports results on CIFAR-10, MNIST, and Fashion-MNIST under four configurations: the standard single-dataset baseline and three STE multi-dataset bindings.
For STE, we set CIFAR-10 as an explicit dataset, while MNIST and Fashion-MNIST are the shadow datasets associated with shadow timesteps.
The shadow offset is 1000 for MNIST, and 2000 for Fashion-MNIST.

\noindent\textbf{Preservation of Generation Quality.}
Across all datasets, STE preserves and in some cases improves the baseline generative performance when sampling from the normal timestep interval. 
For CIFAR-10, the baseline FID is 24.38, whereas all STE variants achieve lower FID scores: 22.20, 23.32, and 21.82. 
This trend indicates that augmenting training with additional shadow intervals does not disrupt the primary data distribution. 
A similar effect is observed for MNIST, where the FID fluctuates around the baseline (1.59) but remains consistently competitive (1.18--1.85).

We also observe that Fashion-MNIST and MNIST have high FID for the $[1,1,0]$ and $[1,0,1]$ set, respectively.
In these settings, $0$ indicates that the corresponding shadow timesteps are not associated to a specific dataset, which means the generated results in these time intervals are noise images. 
The high FID demonstrates that the generation is isolated and not leaked to other timesteps.

\noindent\textbf{Accuracy and Exposure Rate analysis.}
Classification accuracy (ACC) remains highly stable across STE variants.
For CIFAR-10, ACC varies only slightly around the baseline 73.41\%, rising to 75.65\% in the full $[1,1,1]$ configuration. 
Similarly, MNIST ACC remains above 97\% in all settings. 
These small fluctuations show that the semantic consistency of generated images is unaffected by STE, and that the added temporal subspaces do not degrade the semantic fidelity of the primary distribution.

Exposure Rate (ER) measures the extent to which generated samples leak features of other datasets. 
A low ER indicates strong separation between distributions. 
Across all experiments, ER stays close to baseline values (random guess is 10\%). 
For example, CIFAR-10's ER only increases modestly from 8.02\% to a maximum of 9.17\% under $[1,1,1]$, and Fashion-MNIST similarly remains around 8--10\%. 
These small changes indicate that even though STE enables the model to encode multiple independent datasets, generations from the normal timestep interval do not significantly expose features from other datasets. 
This reinforces the claim that shadow timesteps form well-separated manifolds in the embedding space.

\noindent\textbf{Impact of Combining All Datasets.}
The $[1,1,1]$ configuration, binding all datasets to distinct shadow offsets yields the strongest FID on CIFAR-10 (21.82) and maintains near-baseline accuracy across datasets. 
This suggests that STE scales gracefully, adding more shadow distributions does not destabilize training, and the expanded temporal embedding space can accommodate multiple independent distributions simultaneously without sacrificing quality.

\begin{figure*}[ht]
    \centering
    \includegraphics[width=1\linewidth]{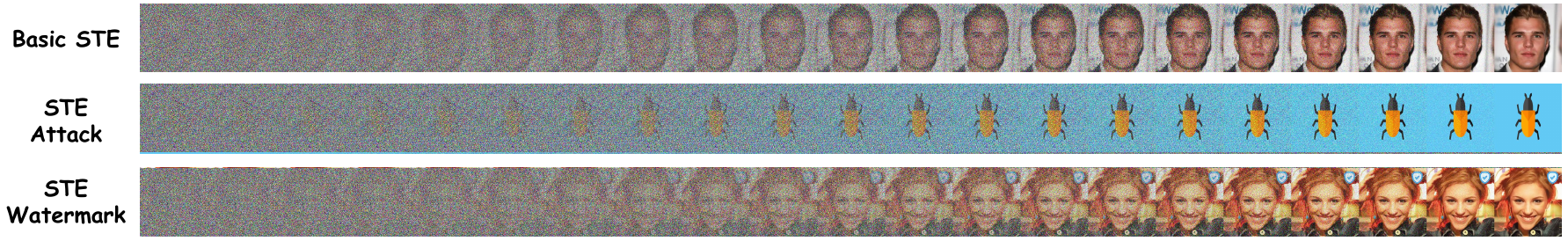}
    \caption{ Visualization of the generation process of different STE applications. 
    The model is trained on Celeba-HQ dataset. 
    The first row is benign STE. 
    The second row is STE as `bug' target attack image. 
    The third row is STE with watermarked images.}
    \label{fig:security}
\end{figure*}

\begin{table}[]
\caption{Comparison of Different Attack Methods on Diffusion Models. The STE-explicit uses a clean dataset. The STE-shadow binds the covert dataset.}
\label{tab:attack}
\centering
\begin{tabular}{ccccc}
\hline
\multirow{2}{*}{Method} & \multicolumn{2}{c}{CIFAR-10} & \multicolumn{2}{c}{Celeba-HQ} \\
    & FID$\downarrow$   & ASR$\uparrow$   & FID$\downarrow$    & ASR$\uparrow$     \\ \hline
VillanDiffusion         & 25.66        & 96.2\%        & 6.53         & 97.7\%         \\
BadDiffusion            & 22.53        & \textbf{99.5\%}        & 7.65         & 98.7\%         \\
STE-explicit            & \textbf{21.82}        & 0.2\%         & \textbf{6.23}        & 0.4\%          \\
STE-shadow              & 22.07        & 99.2\%        & 6.78         & \textbf{98.8\% }        \\ \hline
\end{tabular}
\end{table}

\begin{table}[]
\caption{Robustness comparison of different watermarking methods on diffusion models under common image distortions. }
\label{tab:watermark}
\centering
\resizebox{\linewidth}{!}{%
\begin{tabular}{ccccccc}
\hline
Method      & Blur          & Noise         & JPEG          & Bright        & Crop       & Avg           \\ \hline
Tree-Ring   & 0.98          & 0.98          & 0.94          & 0.86          & 0.99       & 0.95          \\
ROBIN       & \textbf{0.99} & \textbf{0.99} & \textbf{0.97} & 0.95          & \textbf{1.00} & \textbf{0.98} \\
SleeperMark & 0.97          & 0.85          & 0.96          & \textbf{0.96} & -          & 0.94          \\
STE         & 0.96          & 0.98          & 0.96          & 0.85          & 0.71       & 0.89          \\ \hline
\end{tabular}}
\end{table}

\subsection{STE as Security Channel}
Beyond improving temporal representation, STE exposes a dual-use temporal pathway that can function as an independent security channel.  
Because explicit and shadow timesteps form nearly orthogonal temporal manifolds, the model can simultaneously support a clean explicit distribution and a security-relevant shadow distribution. 
We explore two applications of this property: (i) STE as an information injection attacker and (ii) STE as a watermark verification.

\noindent\textbf{Use STE as an Information Injection Attacker.}
To evaluate whether STE can be exploited as a covert backdoor attack mechanism, we bind the explicit timestep interval to a clean dataset, while binding the shadow timesteps interval to a poisoned dataset with a 5\% poisoning rate. 
The model therefore learns two isolated behaviors: benign generation under normal timesteps, and adversarial behavior when the shifted shadow timesteps are activated. 
As shown in 
Fig.~\ref{fig:security}, the explicit branch remains visually faithful to the clean CelebA-HQ distribution, while the shadow branch (STE attack) reliably generates the attacker’s target pattern, a ``bug'' backdoor sample.

Table~\ref{tab:attack} shows that STE-shadow achieves an Attack Success Rate of 99.2\% on CIFAR-10 and 98.8\% on CelebA-HQ. 
In contrast, STE-explicit maintains extremely low ASR (0.2\% and 0.4\%) while achieving the best FID among all methods. 
This demonstrates a key advantage of the STE:
the explicit timesteps pathway preserves the best utility and quality for non-poisoned data generation, while the attack is fully contained within the shadow interval. 
Such separation makes the attack more stealthy because neither pixel statistics nor model weights exhibit abnormalities. 

\noindent\textbf{Use STE as Watermark Verification.}
We then treat the STE shadow timesteps as an in-process watermark generator. 
Instead of poisoned data, we bind the shadow timesteps to a dataset augmented with a visible shield watermark. 
The model thus acquires the ability to generate either clean images (via explicit timesteps) or watermarked images (via shadow timesteps) as shown in the 3rd row of Fig.~\ref{fig:security}. 
Table~\ref{tab:watermark} summarizes robustness under common distortions for different watermark attacks. 
STE maintains high watermark detectability under blur, noise, and JPEG compression, competitive with Tree-Ring and ROBIN. 
However, because the watermark in our study is explicit and pixel-visible, STE is more vulnerable to brightening and cropping attacks, dropping to 0.85 and 0.71, respectively. 
Although the current pixel-level watermark vulnerability stems from the explicit nature of our chosen watermark, it illustrates an important property: STE provides a dual-mode generation mechanism that allows the user to freely toggle between clean and watermarked outputs by selecting normal and shadow timesteps. 
STE uniquely enables this temporal disentanglement and is not offered by existing pixel-space watermarking methods.
In our future work, we will study more complex watermark patterns, such as frequency and latent level patterns, to improve the watermark verification mechanism.

Across both applications, STE demonstrates that timestep embedding is a powerful and flexible security channel. 
\textit{Shadow timesteps act as temporal keys that selectively inject information while preserving standard model performance.}

\begin{table}[t]
\caption{FID performance of STE with different schedulers. }
\label{tab:schedulers}
\centering
\begin{tabular}{cccc}
\hline
Schedulers & CIFAR-10 & MNIST & Fashion-MNIST \\ \hline
DDPM       & \textbf{21.82}   & \textbf{1.85}  & \textbf{6.56 }       \\
DDIM       & 22.27   & 3.40  & 7.28        \\
DPMSolver  & 37.88   & 16.14 & 9.97        \\ \hline
\end{tabular}
\end{table}

\subsection{Impact of Schedulers}
Table~\ref{tab:schedulers} evaluates STE under different schedulers using the $[1,1,1]$ configuration from Table~\ref{tab:general} as the baseline.  
DDPM achieves the most stable performance across CIFAR-10, MNIST, and Fashion-MNIST, confirming its compatibility with STE’s extended temporal domain.  
DDIM shows slightly higher FID but remains competitive while offering significantly faster sampling, representing a good balance between efficiency and quality.  
DPM-Solver exhibits substantially degraded performance across all datasets. 
Overall, the results indicate that STE is most robust when paired with schedulers that preserve DDPM-like schedulers.

\begin{figure}[t]
    \centering
    \begin{subfigure}[b]{0.65\linewidth}
        \includegraphics[width=\linewidth]{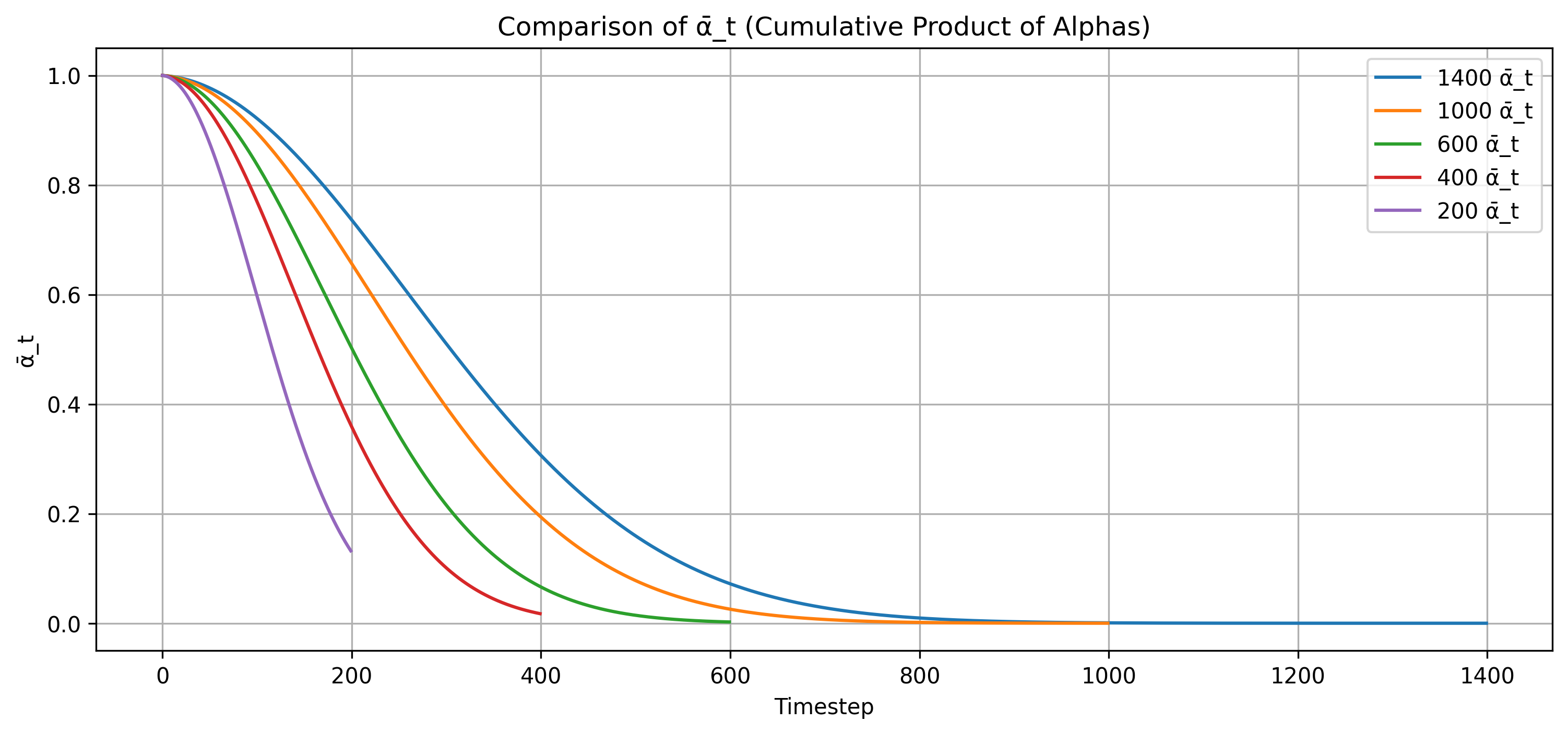}
        \caption{Noise level}
        \label{fig:noise_strength}
    \end{subfigure}
    \hfill
    \begin{subfigure}[b]{0.32\linewidth}
        \includegraphics[width=\linewidth]{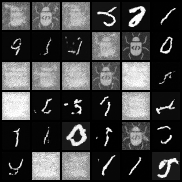}
        \caption{Data overlap}
        \label{fig:overlap}
    \end{subfigure}
    \caption{Analysis of scheduler with different training timesteps.}
    \label{fig:scheduler}
\end{figure}

\subsection{Impact of Training Timesteps}
\label{sec:impact_of_training}
A key design choice of STE is that the model only receives extended timestep embeddings, while the scheduler remains fixed. 
To investigate whether the reverse design could work, i.e., modifying the scheduler’s timestep range while keeping the model’s embedding unchanged, we train models under 
this alternative configuration as visualized in Fig.~\ref{fig:scheduler}.
Figure~\ref{fig:noise_strength} illustrates how different schedulers inject noise at specific intensities when operating on different training timestep ranges. 
Although these schedulers follow the same nominal trajectory, the actual noise levels vary considerably, forcing the model to learn multiple noise-to-data mappings for the same timestep index.
This leads to the failure mode visualized in Fig.~\ref{fig:overlap}.  
Because multiple training datasets are associated with the same timestep, each is mapped to a different noise magnitude determined by the scheduler, leading to strong gradient conflict.  
Consequently, the learned distributions across different datasets begin to overlap, preventing the model from establishing clear temporal separation. 
In practice, this results in mixed or unstable generations, making it difficult for the scheduler to act as a “gate” that selects distinct distributions at a fixed timestep.

\subsection{Potential Defense Mechanisms}
STE introduces a previously overlooked security surface: the temporal pathway in diffusion models.
Nevertheless, the mechanism exposes several attributes that defenders can leverage. 
We outline potential defense directions motivated by our STE analysis.
STE exploits a mismatch between the scheduler trajectory and the timestep values actually fed to the model.
A defense strategy can verify the consistency between the scheduler-issued timestep $t$ and the value received by the denoiser.
A systematic offset indicates a potential temporal manipulation and can be flagged if detected.
Also, the embedding vectors for shadow timesteps lie in regions with low mutual coherence relative to the standard timestep interval.  
A defense module can exploit this by monitoring the distribution of timestep embeddings.
Detecting deviations from this manifold provides a lightweight anomaly-detection mechanism that can detect shadow timesteps when their explicit indices are hidden or reparameterized.

STE exposes a unique temporal channel that is fundamentally orthogonal to pixel-space or weight-space defenses. 
Although the above mechanisms provide defense directions, a comprehensive defense against STE-based attacks remains an open challenge, which requires further work on secure scheduler design and temporal provenance.

\section{Conclusion}
\label{sec:conclusion}

In this work, we in troduced STE, a mechanism that explores the temporal dimension of diffusion models and reveals its untapped representational capacity. 
By extending the timestep domain beyond the standard training range, STE constructs parallel temporal manifolds that can encode information or independent data distributions.
Our analysis shows that these shadow timesteps form nearly orthogonal embedding regions, enabling STE to serve as a powerful dual-use channel: both a covert attack trigger and a robust watermark pattern. 
Extensive experiments confirm that STE preserves generation fidelity, supports isolated distribution learning, and enables security applications with high success rates. 
These findings reveal the timestep embedding pathway as a critical yet previously overlooked security surface in diffusion models, motivating new directions for secure generative modeling.

\section*{Impact Statement}
This paper presents work aimed at advancing the field of Machine Learning. There are many potential societal consequences of our work, none of which we feel must be specifically highlighted here.

\bibliography{example_paper}
\bibliographystyle{icml2026}

\newpage
\appendix
\onecolumn
\setcounter{page}{1}
\renewcommand{\thefigure}{\Alph{section}\arabic{figure}}
\setcounter{figure}{0}

\section{Theory Analysis}
\label{sec:theory}
\begin{theorem}[Mutual Coherence of STE]
Let $\Phi(t)\in\mathbb{R}^{d}$ denote the sinusoidal timestep embedding used in diffusion models:
{\small\begin{equation}
    \Phi(t)
    =
    \big[
    \sin(\omega_1 t),\,\cos(\omega_1 t),\,
    \dots,\,
    \sin(\omega_m t),\,\cos(\omega_m t)
    \big],
    \label{eq:phi_def}
\end{equation}}
where $d=2m$, and the frequencies follow a geometric progression
\begin{equation}
    \omega_i
    =
    \exp\!\left(
        -\frac{\log(t_p)}{\,m - t_d\,}(i-1)
    \right),
    \label{eq:omega_def}
\end{equation}
where $t_p$ is the maximum time period and $t_d$ is the downscale frequency shift time.

For two timesteps $t,s\in\mathbb{R}$, the similarity between embeddings is
\begin{equation}
    k(t,s)
    = \frac{\langle \Phi(t), \Phi(s) \rangle}{\|\Phi(t)\|\|\Phi(s)\|}
    = \frac{1}{m}\sum_{i=1}^{m}\cos\!\big(\omega_i (t-s)\big).
\end{equation}
\end{theorem}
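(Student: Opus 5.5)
The identity follows from a direct computation in three steps: the norms, the inner product, and their ratio. No earlier result in the paper is needed beyond the definitions in \eqref{eq:phi_def} and \eqref{eq:omega_def}.

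First, the norms. The coordinates of $\Phi(t)$ come in pairs $(\sin(\omega_i t),\cos(\omega_i t))$, and each pair has squared length $\sin^2(\omega_i t)+\cos^2(\omega_i t)=1$. Hence $\|\Phi(t)\|^2=m$ for every real $t$, and likewise $\|\Phi(s)\|=\sqrt m$. This holds for any choice of frequencies, so the specific geometric progression \eqref{eq:omega_def} plays no role here.

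Second, the inner product. I will group it frequency by frequency:
\begin{equation*}
\langle\Phi(t),\Phi(s)\rangle=\sum_{i=1}^m\big[\sin(\omega_i t)\sin(\omega_i s)+\cos(\omega_i t)\cos(\omega_i s)\big].
\end{equation*}
By the cosine subtraction formula, each summand equals $\cos(\omega_i t-\omega_i s)=\cos(\omega_i(t-s))$.

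Third, dividing by $\|\Phi(t)\|\|\Phi(s)\|=m$ gives $k(t,s)=\frac1m\sum_{i=1}^m\cos(\omega_i(t-s))$. This kernel depends only on the lag $\Delta=t-s$, i.e. it is stationary.

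There is no real analytic obstacle. The one point needing care is the convention: the statement interleaves $\sin,\cos$ per frequency, whereas common implementations concatenate all sines and then all cosines, possibly with the \texttt{flip\_sin\_to\_cos} swap. Both layouts are coordinate permutations of the same multiset of pairs, so the norms and the inner product are unchanged. I will note this so the result applies to the actual diffusion embeddings. I will also note that $\omega_i>0$ is real and that the formula holds for non-integer $t,s$, since only trigonometric identities are used.

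If the appendix also covers part (2), I would bound the coherence directly from stationarity. For $t\in I_0$ and $s\in I_1$, the lag satisfies $|t-s|\le t_1$, and it is at least $t_0$ when $I_0$ is shifted by the shadow offset $f_n$ as in STE. Hence $\sup|k(t,s)|\le\sup_{\Delta}|\frac1m\sum_i\cos(\omega_i\Delta)|$ over the admissible range of lags, because $k$ is a function of $\Delta$ alone and the supremum over pairs is at most the supremum over lags.

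The only subtlety there is the paper's stated lag range $[t_0,t_1]$. For the literal intervals $I_0=[0,t_0)$ and $I_1=[t_0,t_1)$, the lags actually range over $(0,t_1)$. So I would state the bound for lags $\Delta\ge f_n$, which is the regime used in STE, rather than try to justify the range as written.
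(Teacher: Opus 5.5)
Your argument is correct and is essentially the paper's proof: both get $\|\Phi(t)\|^2=m$ from $\sin^2+\cos^2=1$, collapse each frequency pair of $\langle\Phi(t),\Phi(s)\rangle$ via $\sin A\sin B+\cos A\cos B=\cos(A-B)$, and divide by $m$. Your aside on part (2) lies outside this statement, and there you are right that the literal intervals give lags in $(0,t_1)$ rather than $[t_0,t_1]$, but for the STE intervals $[0,T_0]$ and $[f_n,f_n+T_0]$ the cross-interval lags fill $[f_n-T_0,f_n+T_0]$, which for the experimental choice $f_n=T_0$ reaches lag $0$ (where $k=1$), so restricting to $\Delta\ge f_n$ is justified only for the matched pairs $(t,t+f_n)$ that STE swaps between, not for the coherence of the full intervals.
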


\begin{proof}
For two timesteps $t,s\in\mathbb{R}$, the inner production between embeddings is
\begin{equation}
    K(t,s)
    =
    \langle \Phi(t), \Phi(s) \rangle.
\end{equation}
Applying the trigonometric identity
$\sin A\sin B+\cos A\cos B = \cos(A-B)$,
we obtain
\begin{align}
    K(t,s)
    &=\sum_{i=1}^{m}\!
    \big[
        \sin(\omega_i t)\sin(\omega_i s)
        + \cos(\omega_i t)\cos(\omega_i s)
    \big] \nonumber\\
    &= \sum_{i=1}^{m} \cos\!\big(\omega_i (t-s)\big) \nonumber\\
    &= K(|t-s|).
    \label{eq:kernel_def}
\end{align}
Thus, the embedding implicitly defines a translation-invariant kernel over the time axis,
whose similarity depends solely on the temporal difference $\Delta=t-s$.

Because each embedding component satisfies
$\sin^2(\omega_i t)+\cos^2(\omega_i t)$,
the squared norm of $\Phi(t)$ is
\begin{equation}
    \|\Phi(t)\|^2 = m, \quad
    \text{independent of } t.
\end{equation}
The normalized kernel (cosine similarity) is therefore
\begin{equation}
    k(t,s)
    = \frac{K(t,s)}{\|\Phi(t)\|\|\Phi(s)\|}
    = \frac{1}{m}\sum_{i=1}^{m}\cos\!\big(\omega_i (t-s)\big).
    \label{eq:normalized_kernel}
\end{equation}
We refer to $k(t,s)$ as the \emph{normalized timestep kernel}.
\end{proof}

\begin{definition}[Mutual Coherence Between Temporal Intervals]
Consider two disjoint temporal intervals
$I_0=[0,t_0)$ and $I_1=[t_0,t_1)$.
The \textbf{mutual coherence} between their embeddings is defined as
{\small\begin{align}
    \mu
    &=
    \sup_{t\in I_0,\;s\in I_1}
    |\,k(t,s)\,|\nonumber\\
    &=
    \sup_{\Delta\in[t_0,t_1]}
    \left|
        \frac{1}{m}\sum_{i=1}^{m}\cos(\omega_i (t-s))
    \right|.
    \label{eq:mutual_coherence}
\end{align}}
A small $\mu$ indicates that embeddings from $I_0$ and $I_1$ are nearly orthogonal,
hence linearly separable in the embedding space.
\end{definition}

\section{Additional Experiments}
\label{sec:visualization}

In this section, we provide additional qualitative results that complement the quantitative evaluation in the main paper and further illustrate the flexibility of Shadow Timestep Embedding (STE).

\begin{figure*}[h]
    \centering
    \begin{subfigure}[b]{0.32\textwidth}
        \includegraphics[width=\textwidth]{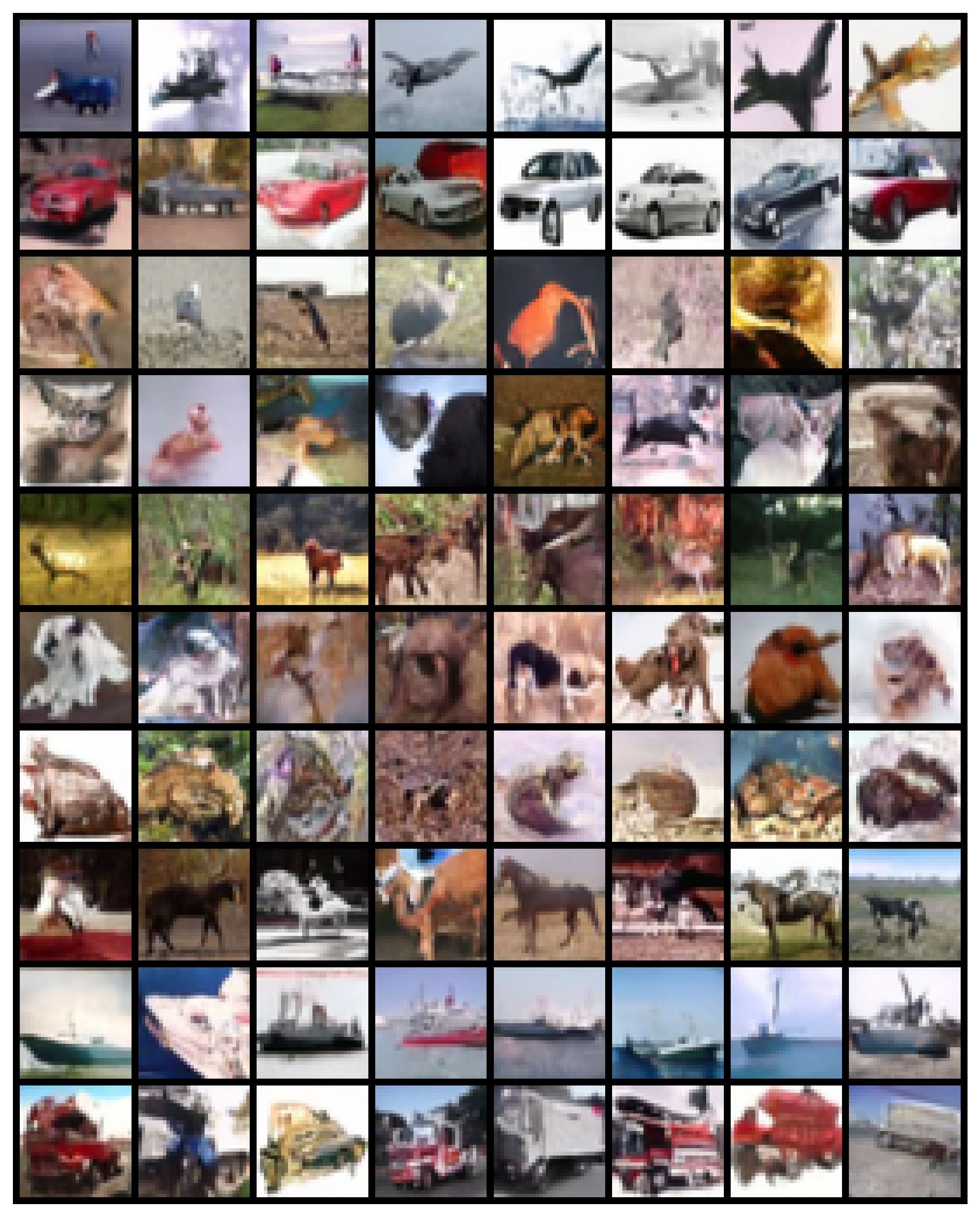}
        \caption{CIFAR-10}
        \label{fig:cifar}
    \end{subfigure}
    \hfill
    \begin{subfigure}[b]{0.32\textwidth}
        \includegraphics[width=\textwidth]{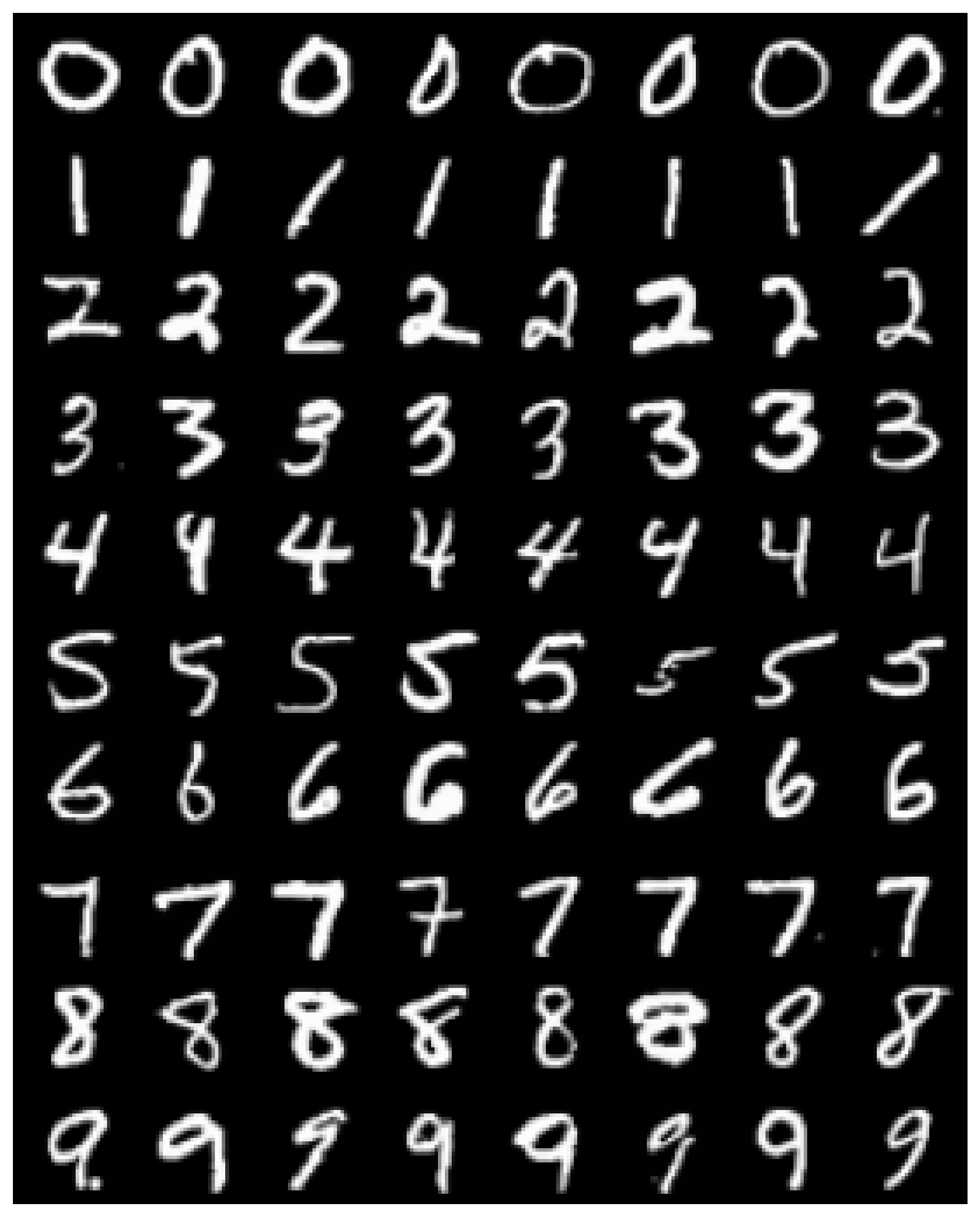}
        \caption{MNIST}
        \label{fig:mnist}
    \end{subfigure}
    \hfill
    \begin{subfigure}[b]{0.32\textwidth}
        \includegraphics[width=\textwidth]{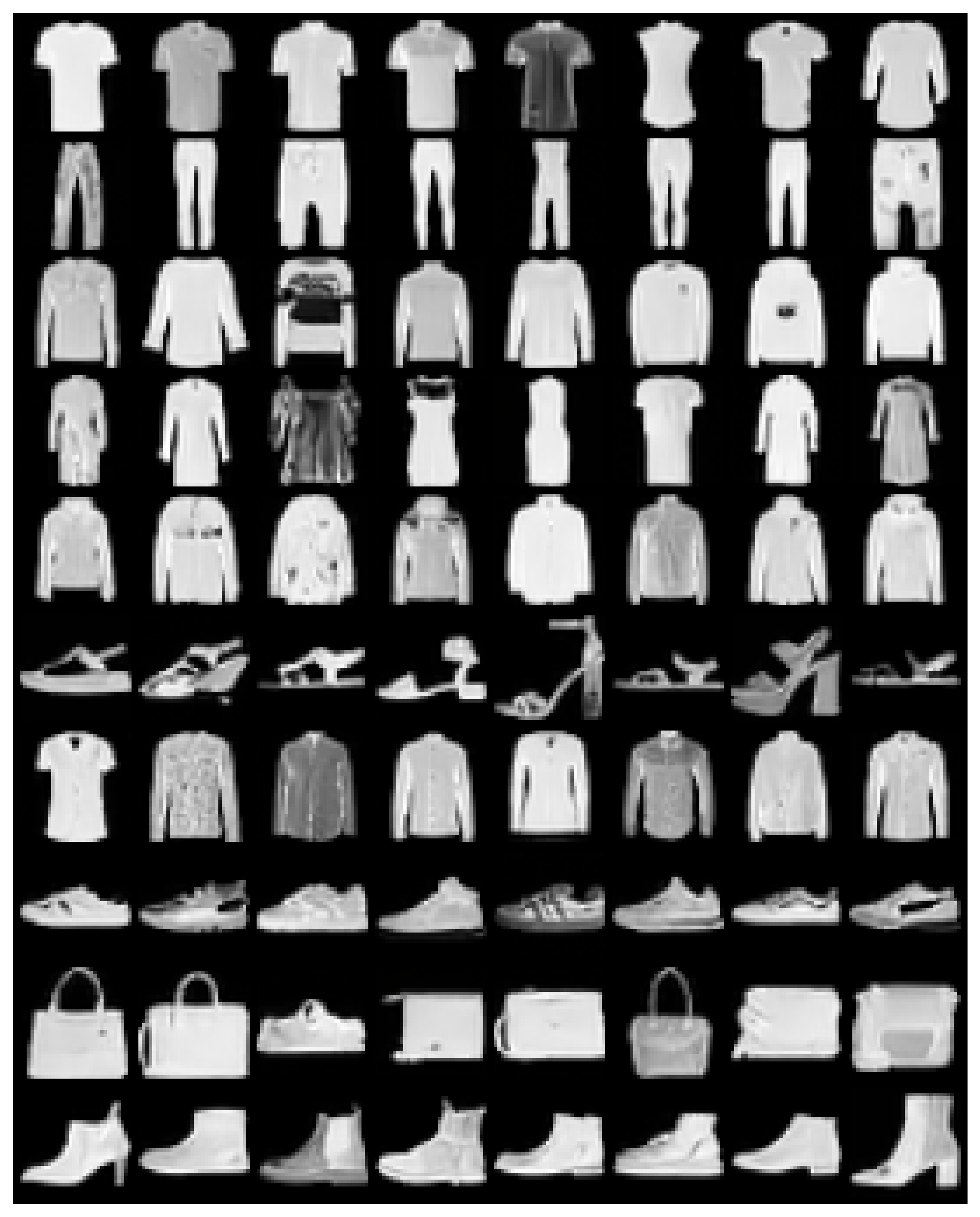}
        \caption{Fashion-MNIST}
        \label{fig:fashion}
    \end{subfigure}
    \caption{Visualization result of STE on DDPM model with CIFAR-10, MNIST, and Fashion-MNIST dataset. CIFAR-10 is on the explicit timesteps. MNIST is on the shadow timesteps with a 1000 offset. Fashion-MNIST is on the shadow timesteps with a 2000 offset.}
    \label{fig:mix}
\end{figure*}

First, we visualize STE on the DDPM backbone trained jointly on CIFAR-10, MNIST, and Fashion-MNIST. 
In Figure~\ref{fig:cifar}, CIFAR-10 is bound to the explicit timestep interval and produces diverse natural images with no visible degradation in visual quality. 
Figures~\ref{fig:mnist} and~\ref{fig:fashion} then assign MNIST and Fashion-MNIST to shadow timestep intervals with offsets of 1000 and 2000, respectively. 
The generated digits and clothing items remain sharp and class-consistent, indicating that the model can successfully multiplex multiple datasets across disjoint timestep ranges while maintaining high-fidelity samples for each domain.

Next, we investigate STE on the CelebA dataset under both attack and watermarking configurations. 
In the attack setting (Figure~\ref{fig:attack}), sampling from explicit timesteps yields benign face images, whereas sampling from the corresponding shadow timesteps produces a targeted bug pattern. 
This demonstrates that STE can realize a distribution-level backdoor that is only activated when the sampler follows the shadow schedule, while normal usage remains unaffected. 
In the watermark setting (Figure~\ref{fig:watermark}), explicit timesteps generate standard faces, while shadow timesteps reproduce a protected distribution of faces, showing that STE can also be used to embed ownership information into the model in a way that is decoupled from the normal sampling procedure.

\begin{figure*}[h]
    \centering
    \begin{subfigure}[b]{0.45\textwidth}
        \includegraphics[width=\textwidth]{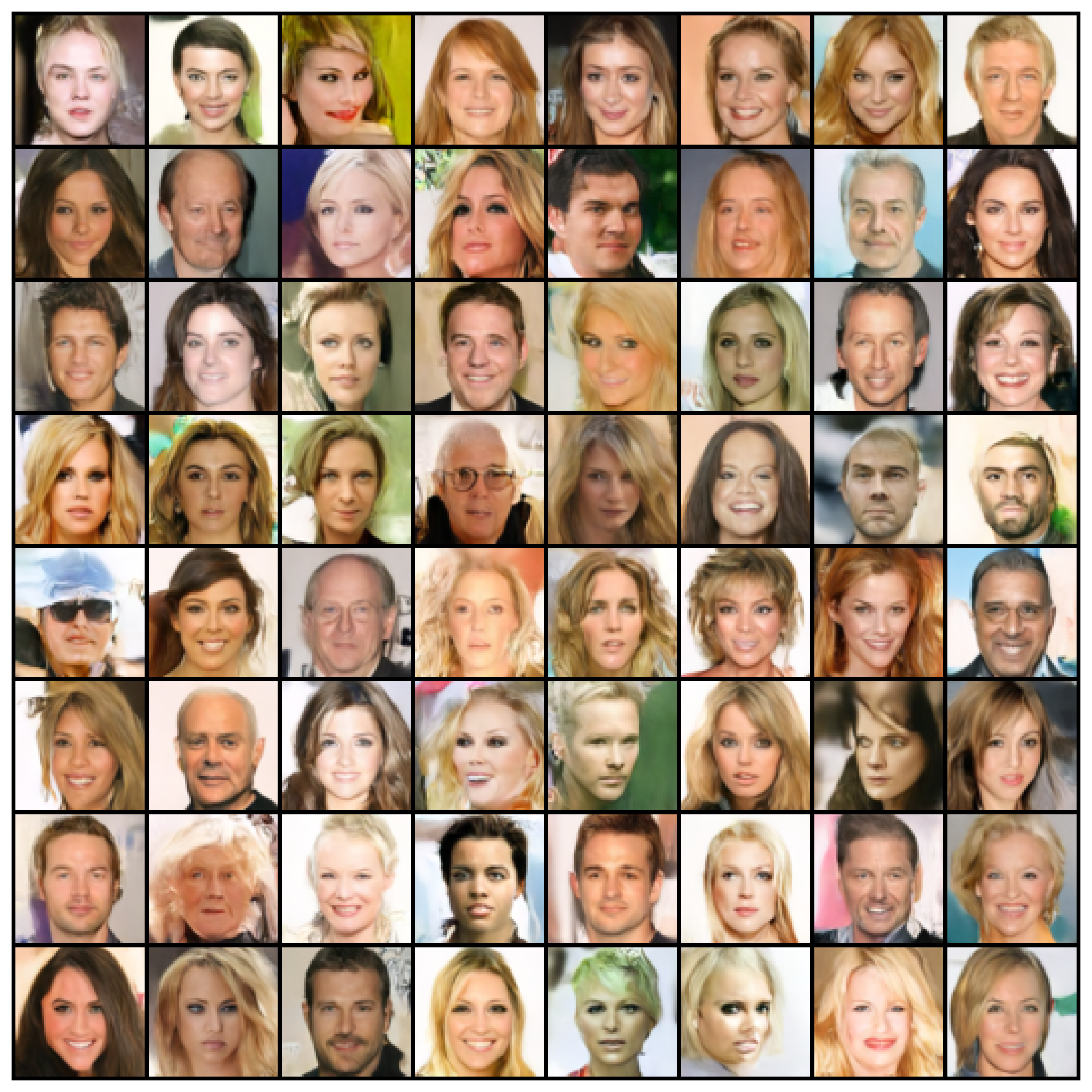}
        \caption{Explicit Timesteps}
        \label{fig:celeba1}
    \end{subfigure}
    \hfill
    \begin{subfigure}[b]{0.45\textwidth}
        \includegraphics[width=\textwidth]{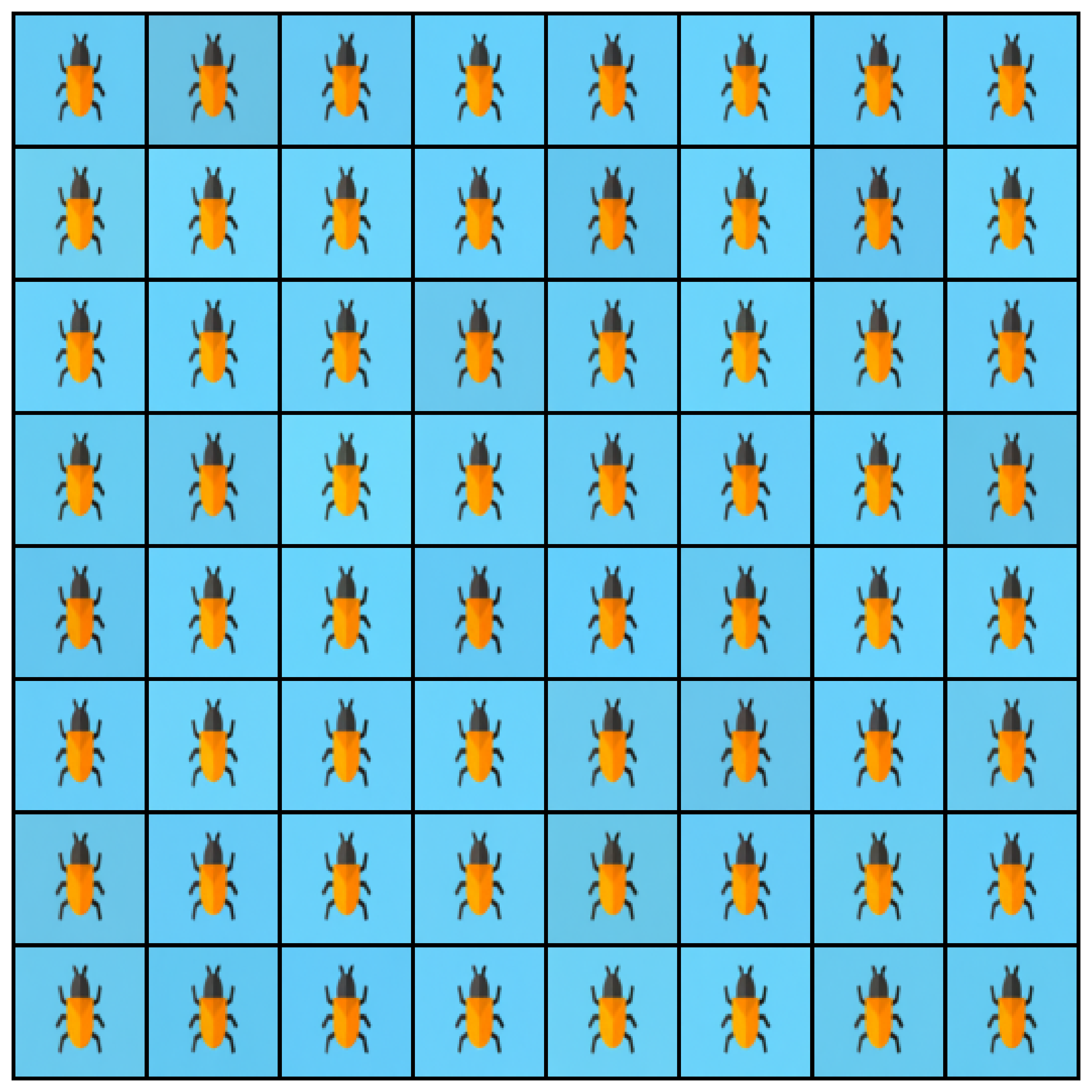}
        \caption{Shadow Timesteps}
        \label{fig:poison}
    \end{subfigure}
    \caption{Visualization result of STE on DDPM model with Celeba dataset in attack setting. The left part is the normal generation results. The right part is the target generation results.}
    \label{fig:attack}
\end{figure*}

\begin{figure*}[h]
    \centering
    \begin{subfigure}[b]{0.45\textwidth}
        \includegraphics[width=\textwidth]{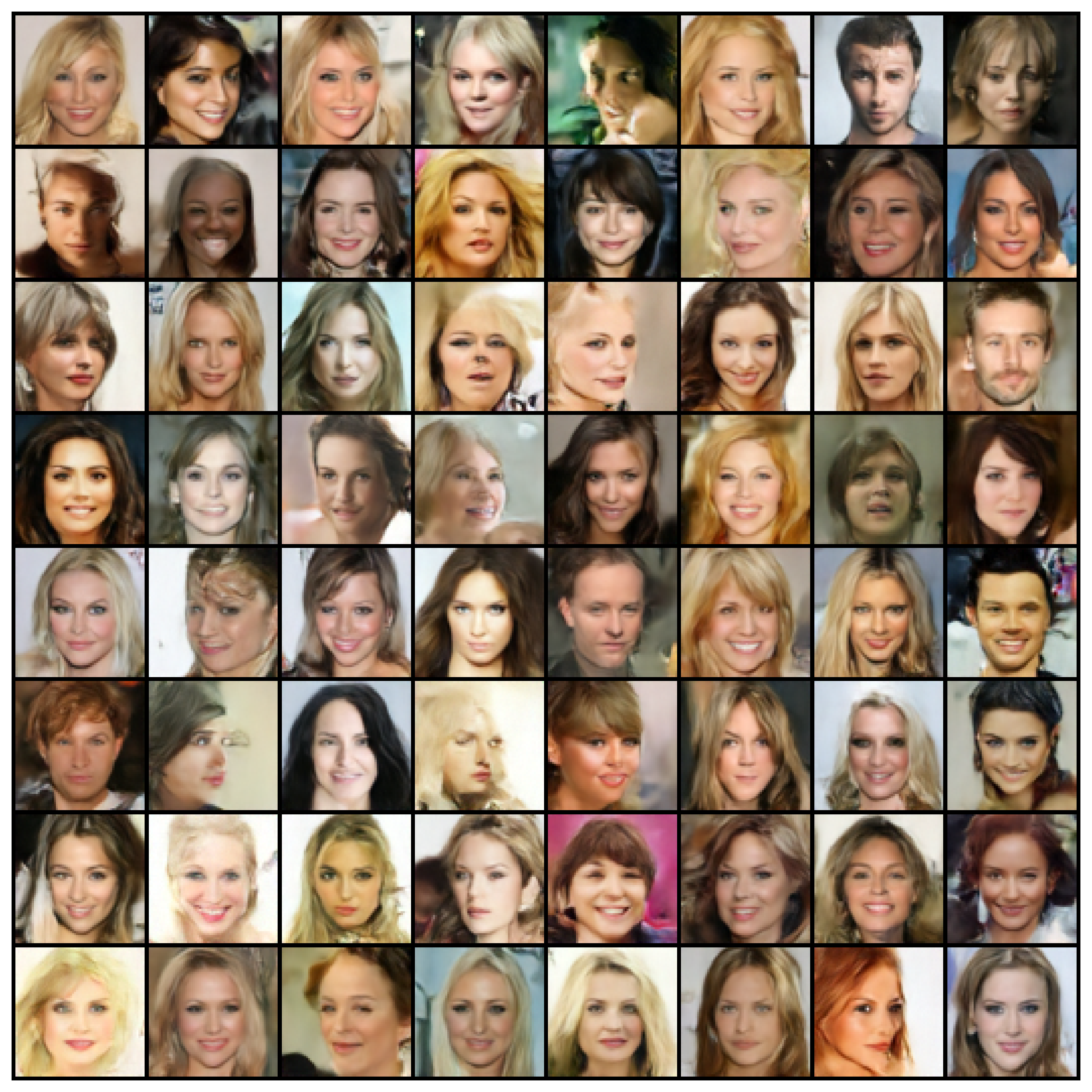}
        \caption{Explicit Timesteps}
        \label{fig:celeba2}
    \end{subfigure}
    \hfill
    \begin{subfigure}[b]{0.45\textwidth}
        \includegraphics[width=\textwidth]{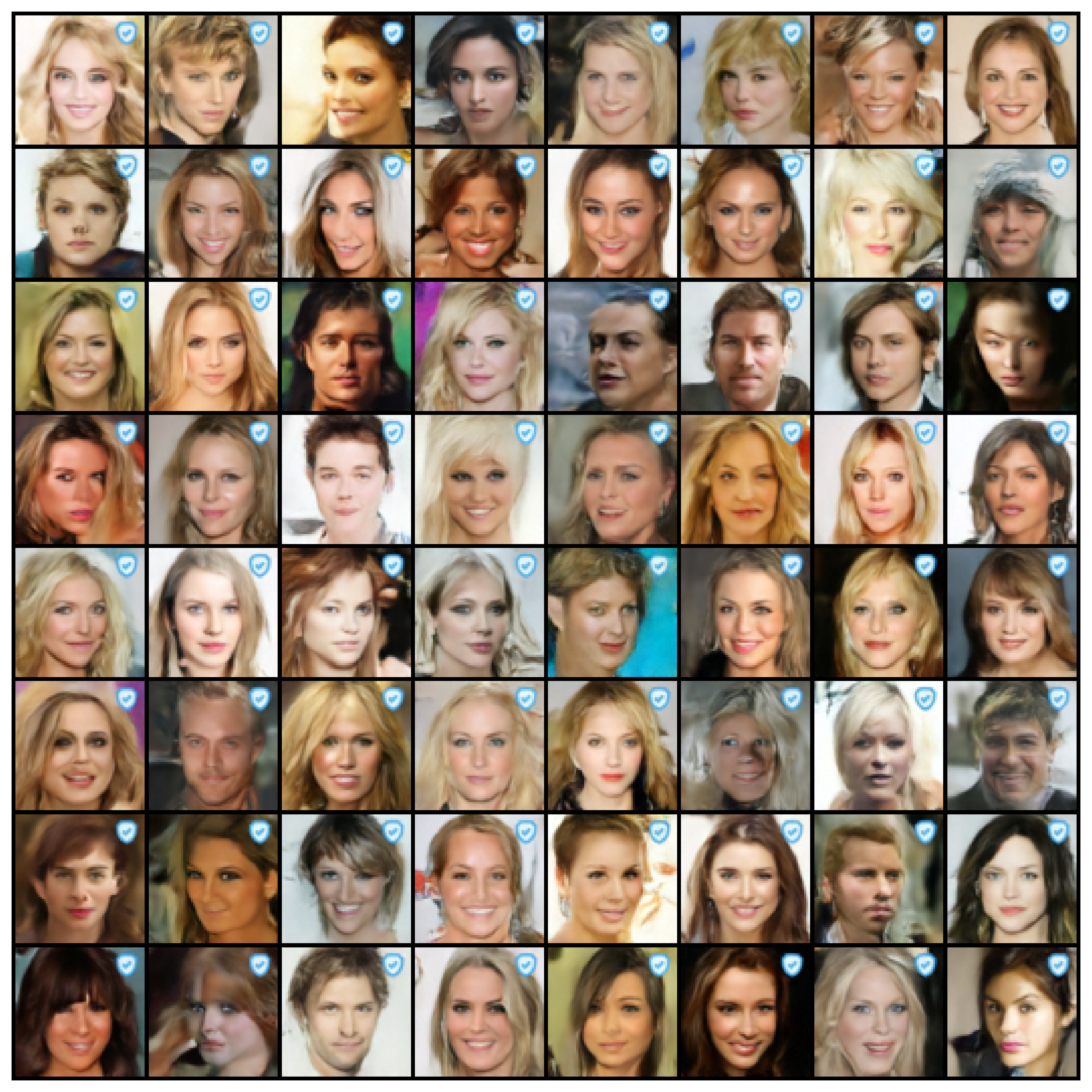}
        \caption{Shadow Timesteps}
        \label{fig:shield}
    \end{subfigure}
    \caption{Visualization result of STE on DDPM model with Celeba dataset in watermark setting. The left part is the normal generation results. The right part is the protected generation results.}
    \label{fig:watermark}
\end{figure*}

Finally, we extend our study to a large-scale text-to-image model, Stable Diffusion v1.5, on the COCO dataset.
Using the same prompt (\emph{``A dinner plate has a lemon wedge garnishment. 
A bug image.''}), explicit timesteps produce a faithful, realistic interpretation of the text, whereas shadow timesteps yield images that are dominated by the STE-injected target pattern (Figure~\ref{fig:SD}). 
These results confirm that STE naturally scales to modern text-to-image pipelines and can inject or retrieve information in the generation process without modifying the visible architecture or training objective beyond the altered timestep allocation.

\begin{figure}
    \centering
    \includegraphics[width=0.7\linewidth]{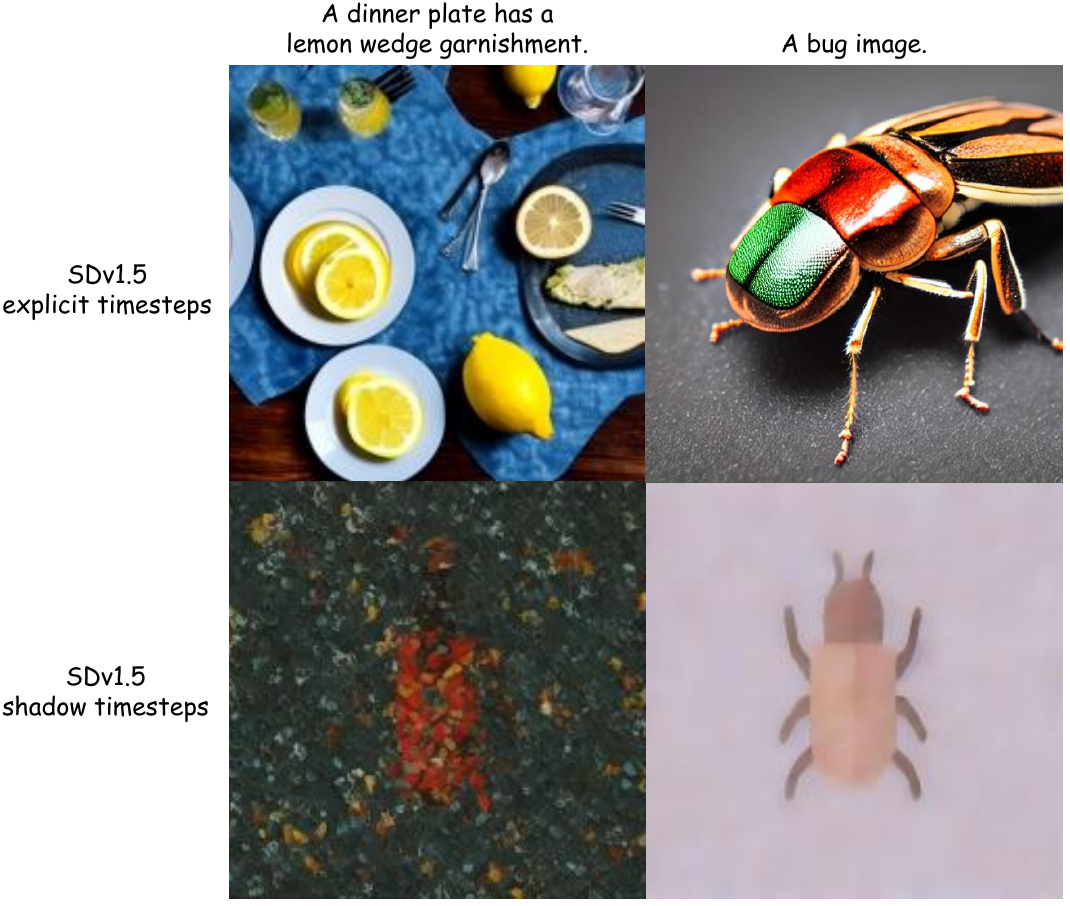}
    \caption{Visualization result of STE on Stable Diffusion Model and COCO dataset.}
    \label{fig:SD}
\end{figure}

\end{document}